\newtheorem{theorem}{Theorem}
\newtheorem{claim}{Claim}
\newtheorem{remark}{Remark}
\newcommand{\qed}{\mbox{\ \ \ }\rule{6pt}{7pt} \bigskip}
\newenvironment{proof}{\noindent{\em Proof:}}{\hfill\qed}
\newenvironment{oneshot}[1]{\@begintheorem{#1}{\unskip}}{\@endtheorem}
\newcommand{\AutoAdjust}[3]{\mathchoice{ \left #1 #2  \right #3}{#1 #2 #3}{#1 #2 #3}{#1 #2 #3} }
\newcommand{\Xcomment}[1]{{}}
\newcommand{\InBrackets}[1]{\AutoAdjust{[}{#1}{]}}
\newcommand{\Ex}[2][]{\operatorname{\mathbf E}_{#1}\InBrackets{#2}}
\newcommand{\eqdef}{\stackrel{\textrm{def}}{=}}
\newcommand{\be}{\begin{equation}}
\newcommand{\ee}{\end{equation}}
\newcommand{\bee}{\begin{equation*}}
\newcommand{\eee}{\end{equation*}}
\newcommand{\eps}{\varepsilon}
\newcommand{\gain}{g}
\newcommandtwoopt{\gainit}[2][i][t]{\gain_{#1#2}}
\newcommand{\Gain}{G}
\newcommandtwoopt{\Gainit}[2][i][t]{\Gain_{#1#2}}
\newcommand{\family}{\mathcal{F}}
\newcommand{\algfam}{\mathcal{A}_\family}
\newcommand{\advfam}{\mathcal{D}_\family}
\newcommand{\algsingle}{\mathcal{A}_\text{single}}
\newcommand{\algdec}{\mathcal{A}_\text{dec}}
\newcommand{\algarb}{\mathcal{A}_\text{arb}}
\newcommand{\algrand}{\mathcal{A}_\text{rand}}
\newcommand{\alguniv}{\mathcal{A}_\text{univ}}
\newcommand{\algconv}{\mathcal{A}_\text{conv}}
\newcommand{\advuniv}{\mathcal{D}_\text{univ}}
\newcommand{\advsimple}{\mathcal{D}_\text{simple}}
\newcommand{\Loop}{\text{Loop}}
\begin{document}
\pagenumbering{gobble}
\title{Tight Lower Bounds for Multiplicative Weights Algorithmic Families}

\author{
Nick Gravin\thanks{Massachusetts Institute of Technology.
32 Vassar St, Cambridge, MA 02139. \texttt{ngravin@mit.edu}.}
\and
Yuval Peres\thanks{Microsoft Research.
One Microsoft Way, Redmond, WA 98052.
\texttt{peres@microsoft.com}.
}
\and
Balasubramanian Sivan\thanks{Google Research.
111 8th Avenue, New York, NY 10011.
\texttt{balusivan@google.com}.
}
}

\date{}

\maketitle

\begin{abstract}
We study the fundamental problem of prediction with expert advice and develop 
regret lower bounds for a large family of algorithms for this problem. We 
develop simple adversarial primitives, that lend themselves to 
various combinations leading to sharp lower bounds for many algorithmic 
families. We use these primitives to show that the classic 
Multiplicative 
Weights Algorithm (MWA) has a regret of $\sqrt{\frac{T \ln k}{2}}$, there by 
completely 
closing the gap between upper and lower bounds. We further show a regret lower 
bound 
of $\frac{2}{3}\sqrt{\frac{T\ln k}{2}}$ for a much more general family of 
algorithms than MWA, where the learning rate can be arbitrarily varied over 
time, or even picked from arbitrary distributions over time. We also use our 
primitives to construct adversaries in the geometric horizon setting for MWA to 
precisely characterize the regret at $\frac{0.391}{\sqrt{\delta}}$ for the case 
of $2$ experts and a lower bound of $\frac{1}{2}\sqrt{\frac{\ln k}{2\delta}}$ 
for the 
case of arbitrary number of experts $k$.

\end{abstract}
\newpage
\pagenumbering{arabic}
\section{Introduction}
\label{sec:intro}
In this paper we develop tight lower bounds on the regret obtainable by a broad
family of algorithms for the fundamental problem of prediction with expert
advice. Predicting future events based on past observations, a.k.a. prediction
with expert advice, is a classic problem in learning. The experts framework was
the first framework proposed for online learning and encompasses several
applications as special cases. The underlying problem is an online optimization
problem: a {\em player} has to make a decision at each time step, namely, decide
which of the $k$ experts' advice to follow. At every time $t$,  an {\em
adversary} sets gains for each expert: a gain of $\gainit$ for expert $i$ at
time $t$. Simultaneously, the \emph{player}, seeing the gains from all previous
steps except $t$, has to choose an action, i.e., decide on which expert to
follow. If the player follows expert $j(t)$ at time $t$, he gains
$\gainit[j(t),][t]$. At the end of each step $t$, the gains associated with all
experts are revealed to the player, and the player's choice is revealed to the
adversary. In the {\em finite horizon model}, this process is repeated for $T$
steps, and the player's goal is to perform (achieve a cumulative gain) as close
as possible to the best single action (best expert) in hindsight, i.e., to
minimize his \emph{regret} $R_{T,k}$: $$R_{T,k} = \max_{1\leq i \leq k}
\sum_{t=1}^T \gainit - \sum_{t=1}^T \gainit[j(t),][t].$$ Apart from assuming
that the $\gainit$'s are bounded in $[0,1]$, we don't assume anything else about
the gains\footnote{As one might expect, it turns out that
restricting the adversary to set gains in $\{0,1\}$ instead of $[0,1]$ is
without loss of generality (see~\cite{GPS16} or~\cite{LS14}). Henceforth, we
restrict ourselves to the binary adversary, which just sets gains of $0$ or
$1$.}.  Just as natural as the finite horizon model is the model with a {\em
geometric horizon}: the stopping time is a geometric random variable with
expectation $\frac{1}{\delta}$. In other words, the process ends at any given
step with probability $\delta$, independently of the past. Equivalently, both
the player and the adversary discount the future with a $1-\delta$ factor. In
this paper, we study both the finite horizon model and the geometric horizon
model. We begin with the discussion for finite horizon model below. 
\vspace{-0.6em}
\paragraph{Main contribution.} In this paper we develop simple adversarial
primitives and demonstrate that, when applied in various combinations, they
result in remarkably sharp lower bounds for a broad family of algorithms. We
first describe the family of algorithms we study, and then discuss our main
results.
\vspace{-0.6em}
\paragraph{Multiplicative Weights Algorithm.} We begin with the Multiplicative
Weights Algorithm, which is a simple, powerful and widely used algorithm for a
variety of learning problems. In the experts problem, at each time $t$, MWA
computes the cumulative gain $\Gainit[i][t-1] = \sum_{s=1}^{t-1} \gainit[i][s]$
of each expert $i$ accumulated over the past $t-1$ steps, and will follow expert
$i$'s advice with probability proportional to $e^{\eta\Gainit[i][t-1]}$. Namely,
with probability $\frac{e^{\eta\Gainit[i][t-1]}}{{\sum_{j=1}^{k} e^{\eta
\Gainit[j][t-1]}}}$ where $\eta$ is a parameter that can be tuned.  The per-step
computation of the algorithm is extremely simple and straightforward. The
intuition behind the algorithm is to increase the weight of any expert that
performs well by a multiplicative factor.  Despite the simplicity and the
heuristic origins of the algorithm, it is surprisingly powerful: the pioneering
work of Cesa Bianchi et al.~\cite{BFHHSW97} showed that MWA obtains a sublinear
regret of $\sqrt{\frac{T\ln k}{2}}$, and that this is asymptotically optimal as
the number of experts $k$ and the number of time steps $T$ both tend to
$\infty$.
\vspace{-0.4em}
\paragraph{Families of algorithms.} The MWA is a single-parameter family of
algorithms, i.e., the learning rate parameter $\eta$ is the only parameter
available for the player. In general one could think of $\eta$ being an
arbitrary function of time $t$, i.e., at step $t$, algorithm follows expert $i$
with probability $\frac{e^{\eta(t)\Gainit[i][t-1]}}{{\sum_{j=1}^{k} e^{\eta(t)
\Gainit[j][t-1]}}}$. Note that this is a $T$-parameter family of algorithms and
is quite general. To see why this is general, note that after fixing 
$\Gainit[i][t-1]$ for all 
$i$,
\emph{any probability $p_{it}$ of picking expert $i$ at time $t$ can expressed 
as}
$\frac{e^{\eta(t)\Gainit[i][t-1]}}{{\sum_{j=1}^{k} e^{\eta(t)
\Gainit[j][t-1]}}}$, irrespective of what $p_{i,t-1}$ was --- something that is 
certainly not possible when $\eta$ is independent of $t$. The most general 
family of algorithms we study is when at each time $t$, the quantity $\eta(t)$ 
is drawn from an arbitrary 
distribution $F_t$ over reals.
Since $F_t$ could be arbitrary, this is an infinite-parameter
family of algorithms. We denote the 
\begin{enumerate}
\itemsep-0.3em 
\item single parameter MWA
family by $\algsingle$; 
\item family where $\eta(t)$ decreases with $t$ by
$\algdec$; 
\item family where $\eta(t)$ is arbitrary function of $t$ by
$\algarb$; 
\item family where $\eta(t)$ is drawn from $F_t$ for each $t$ by
$\algrand$. 
\end{enumerate} 
It is straightforward to see that $\algsingle
\subseteq \algdec \subseteq \algarb \subseteq \algrand$. The reason we start 
with $\algsingle$ is that it is the classic MWA and precisely characterizing 
its regret is still open. We study $\algdec$ because often when MWA algorithms 
are working with unknown $T$, they employ a strategy where $\eta$ decreases 
with time. We move on to further significantly generalize this by studying 
$\algarb, \algrand$.
\vspace{-0.4em}
\paragraph{Minimax regret, and Notation.} We study the standard notion of 
minimax regret for
each of the above family of algorithms. Formally, let $R_{T,k}(A,D)$ denote the
regret achieved by algorithm $A$ when faced with adversary $D$ in the prediction
with expert advice game with $T$ steps and $k$ experts. We use $R_k(A,D)$ to 
denote the asymptotic\footnote{Although $R_k$ doesn't 
have a $T$ in the subscript, $R_k$ is still dependent on $T$. We suppress $T$ 
merely to indicate asymptotics in $T$.},
in 
$T$, value of $R_{T,k}(A,D)$, i.e., $R_k(A,D)  = 
\sqrt{T}\cdot\lim_{T\to\infty}\frac{R_{T,k}(A,D)}{\sqrt{T}}$. 
The minimax regret of a family $\algfam$ of algorithms against a family 
$\advfam$ of adversaries is given by $R_{T,k}(\algfam, \advfam) = \min_{A \in 
\algfam}\max_{D(A) \in \advfam} R_{T,k}(A,D)$. Let $\advuniv$ denote
the universe of all adversaries. We use the shorthand $R_{T,k}(\algfam)$ for 
$R_{T,k}(\algfam,\advuniv) = \min_{A \in \algfam}\max_{D(A) \in
\advuniv} R_{T,k}(A,D)$. We use $R_k(\algfam)$ to denote the asymptotic, in 
$T$, value of $R_{T,k}(\algfam)$, i.e., $R_k(\algfam)  = 
\sqrt{T}\cdot\lim_{T\to\infty}\frac{R_{T,k}(\algfam)}{\sqrt{T}}$. 
\vspace{-0.4em}
\paragraph{Goal.} One of our goals in this paper is to compute the precise 
values of
$R_{k}(\algsingle)$, $R_{k}(\algdec)$, $R_{k}(\algarb)$ and $R_{k}(\algrand)$
for each value of $k$, and, \emph{describe and compute} the adversarial 
sequences that realize these
regrets. For clarity, we compute the precise values of $R_k(\algfam)$ by: 
\begin{enumerate}
\itemsep-0.3em 
\item computing the best-response adversary in $\advuniv$ for every algorithm 
in $\algfam$;
\item computing $R_k(\algfam)$ the regret of the optimal algorithm in $\algfam$ (i.e., 
the algorithm that gets the smallest regret w.r.t. its best-response adversary).
\end{enumerate}
In many cases, the first step, namely computing the best-response adversary, is
challenging. We find the best-response adversaries for the families
$\algsingle$ and $\algdec$. For the families $\algarb$ and $\algrand$, we 
perform the first step approximately, i.e., we compute a nearly best-response 
adversary, and thus we obtain lower bounds on $R_k(\algarb)$ and 
$R_k(\algrand)$. 
\vspace{-0.4em} 
\paragraph{What is known, and what to expect?} It is well known that for 
$\algfam=\algsingle,\algdec,\algarb,\algrand$: $R_{T,k}(\algfam) \leq 
\sqrt{(T\ln k)/2}$ for all $T,k$, and in the doubly 
asymptotic limit, as both $T$ and $k$ go to $\infty$, the optimal regret of 
$\algsingle$ is $\sqrt{(T\ln k)/2}$, i.e., 
$\lim\limits_{T\to\infty, 
k\to\infty}\left(R_{T,k}(\algsingle)/\sqrt{(T\ln k)/2}\right) = 1$.
(see~\cite{BFHHSW97, Bianchi99}). While there are useful applications for 
$k\to 
\infty$, there are also several interesting use-cases of 
the experts problem with just a few experts (rain-or-shine ($k=2$), 
buy-or-sell-or-hold ($k=3$)). It seems like for small $k$ such as $2,3,4$ etc. 
$R_k(\algsingle)$ could be a significant constant factor smaller than 
$\sqrt{(T\ln k)/2}$. And given that families like $\algdec$ 
etc. are supersets of 
$\algsingle$, it seems even more likely that $R_k(\algdec)$ etc. are 
constant factor smaller than $\sqrt{(T\ln k)/2}$. Surprisingly, we show 
that is not the case: the regret of $\sqrt{(T\ln k)/2}$ that is 
obtained as $k\to\infty$ is \emph{already obtained at $k=2$}. Thus our work 
completely closes the gap between upper and lower bounds for all $k$. 
\vspace{-0.4em}
\subsection{Main Results}
\paragraph{Finite horizon model.} 
\begin{enumerate}[leftmargin=*]
\itemsep-0.3em 
\item $R_k(\algsingle) = R_k(\algdec) = \sqrt{\frac{T\ln k}{2}}$ for even 
$k$,\quad
$R_k(\algsingle) \ge R_k(\algdec)\ge \sqrt{\frac{T\ln k}{2}(1-\frac{1}{k^2})}$ for odd 
$k$.
\item $R_k(\algarb)\ge R_k(\algrand)\ge \frac{2}{3}\sqrt{\frac{T\ln k}{2}}$ for even $k$,\quad
$R_k(\algarb) \geq R_k(\algrand)\ge \frac{2}{3}\sqrt{\frac{T\ln k}{2}(1-\frac{1}{k^2})}$ for
odd $k$.
\end{enumerate}
\vspace{-0.4em}
\paragraph{Geometric horizon model.} In the geometric horizon model, the current
time $t$ is not relevant, since the expected remaining time for which the game
lasts is the same irrespective of how many steps have passed in the past. Thus
$\eta(t)$ is without loss of generality, independent of $t$. Nevertheless,
$\eta$ could still depend on other aspects of the history of the game, like the
cumulative gains of all the experts etc. We establish some quick notation before
discussing results. Let $\delta$ denote the probability that the game stops at
any given step, independently of the past (and therefore the expected length of
the game is $\frac{1}{\delta}$). Let $R_{\delta,k}(A,D)$ denote the regret
achieved by algorithm $A$ when faced with adversary $D$ in the prediction with
expert advice game with stopping probability $\delta$ and $k$ experts. 
The minimax regret for a family $\algfam$ of algorithms is given by
$R_{\delta,k}(\algfam) = R_{\delta,k}(\algfam,\advuniv)=\min_{A \in 
\algfam}\max_{D \in \advuniv}
R_{\delta,k}(A,D)$. Let\footnote{Note that the notation $R_k(\cdot)$ is 
overloaded: it could refer to finite or geometric horizon setting depending on 
the context. 
But since the setting is clear from the context, we drop the $\delta$ vs $T$.} 
$R_k(\algfam) = \frac{1}{\sqrt{\delta}}\lim_{\delta\to 
0}\sqrt{\delta}\cdot R_{\delta,k}(\algfam)$.

We show the following:
%
\quad\quad(1) $R_2(\algsingle) = \frac{0.391}{\sqrt{\delta}}$; \quad\quad (2)
$R_k(\algsingle) \geq \frac{1}{2}\sqrt{\frac{\ln k}{2\delta}}$ for all 
$k$.

The regret lower bound of $\frac{1}{2}\sqrt{\frac{\ln
k}{2\delta}}$ we obtain is at most a factor $2$ away from the regret 
upper bound of $\sqrt{\frac{\ln k}{2\delta}}$. Further, we show that the 
adversarial 
family that we use for the family of algorithms $\algsingle$ to obtain the 
precise regret for $2$ experts, also obtains the optimal regret for the 
universe $\alguniv$ of \emph{all algorithms}. See 
Remark~\ref{rem:geom_2_general} for more on 
this 
result. 
\vspace{-0.4em}
\subsection{Simple adversarial primitives and families}
\label{subsec:SAPF} While the optimal regret $R_k(\algfam)$ 
is defined by optimizing over the most general family $\advuniv$ of all 
adversaries, (i.e., $R_k(\algfam) = R_k(\algfam,\advuniv)$) one of our primary 
contributions in this work is to develop simple 
and 
analytically easy-to-work-with adversarial primitives that we use to construct 
adversarial families (call a typical such family 
$\advsimple$) such that:
\vspace{-0.4em}  
\begin{itemize}
\itemsep-0.3em 
\item $\advsimple$ is simple to-describe and to-optimize-over, i.e., 
computing $\max_{{D \in \advsimple} R_{T,k}(A,D)}$ is much simpler than computing $\max_{D \in
\advuniv} R_{T,k}(A,D)$.
\item optimizing over $\advsimple$ is guaranteed to be as good (or 
approximately as good) as optimizing over $\advuniv$ for many algorithmic 
families $\algfam$, i.e., $R_k(\algfam,\advuniv) = R_k(\algfam,\advsimple)$ for 
many $\algfam$. As $R_k(\algfam,\advuniv) \geq 
R_k(\algfam,\advsimple)$, the non-trivial part is to prove (approximate) equality
for $\algfam$. 
\end{itemize}
We demonstrate the versatility of our primitives by using simple 
combinations of them to develop sharp 
lower bounds to algorithmic families $\algsingle$, $\algdec$, 
$\algarb$, and $\algrand$. 
There is a lot of room for further combinations of primitives that might be 
useful to construct adversarial families tailored to other algorithmic families. 
\vspace{-0.4em}
\paragraph{The ``looping'' and ``straight-line'' primitives.} These primitives 
are best described by focusing on the case of $k=2$ experts. In the two experts 
case, the algorithm makes its decision at step $t$, by just looking at the 
difference $d$ of the cumulative gains of the leading and lagging experts' 
cumulative gains. As such, the adversary has to simply control how the 
difference $d$ evolves over time. The ``looping'' primitive simply loops the 
value of $d$ between $0$ and $1$ indefinitely (i.e., advances\footnote{advances 
here refers to setting the gain of that expert to $1$.} one expert in one 
step and advances the other in the next step and so on, so that $d$ simply 
loops between $0$ and $1$). The ``straight-line'' 
adversary simply keeps advancing the value of $d$ by $1$ at each step. 
Interestingly, the worst-case adversary for each of the finite and geometric 
horizon settings is a composition of looping and straight-line primitives. Strikingly, despite the
apparent similarity between two settings, the
optimal adversaries in the two models turn out to be ``mirror images'' of each
other. The optimal adversary in finite horizon loops first and then goes in 
straight-line, while the geometric horizon's optimal does the reverse. The 
structure of these two families is depicted 
in Figure~\ref{fig:char}, that shows the evolution of the difference $d$ 
between the cumulative gains of the leading and lagging experts. This 
fundamental difference between the structures of the optimal adversary in these 
two settings also manifests in the optimal regret values of these two settings 
(see 
Remark~\ref{rem:GeomVsFinite}). 

The generalizations of these primitives for arbitrary $k$ is 
straightforward. The looping primitive partitions the set of experts into two 
teams, say $A$ and $B$, and then it advances all experts in team $A$ in one 
step and in team $B$ in the other, and so on. The straight-line primitive picks 
an arbitrary expert and keeps advancing that expert by $1$ in each step. 

\begin{figure}
\centering
\begin{subfigure}
	\centering
	\includegraphics[scale=.4]{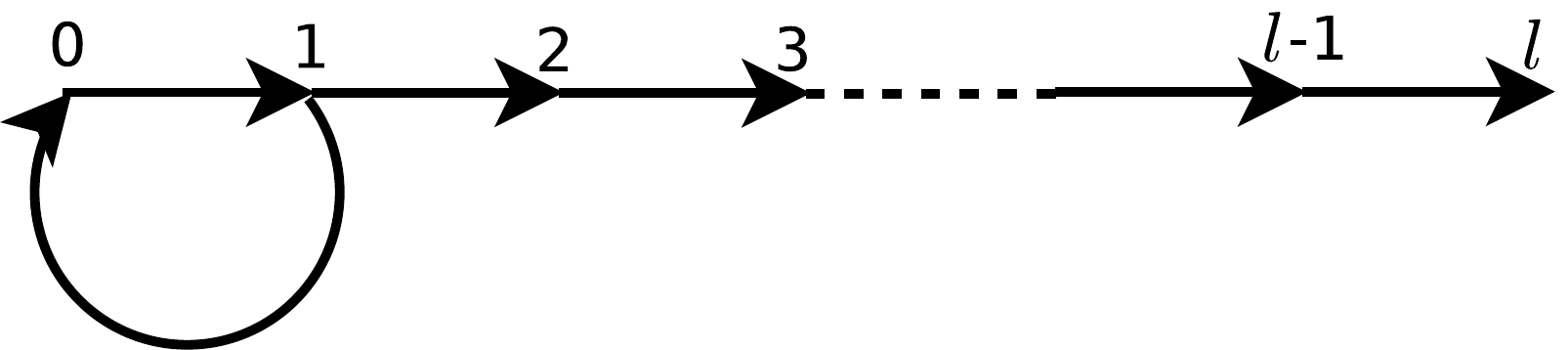}
\end{subfigure}
\ \ \ \ \ \ \ \ \ \ \ \ \ \ \ \ \qquad
\begin{subfigure}
	\centering
	\includegraphics[scale=.4]{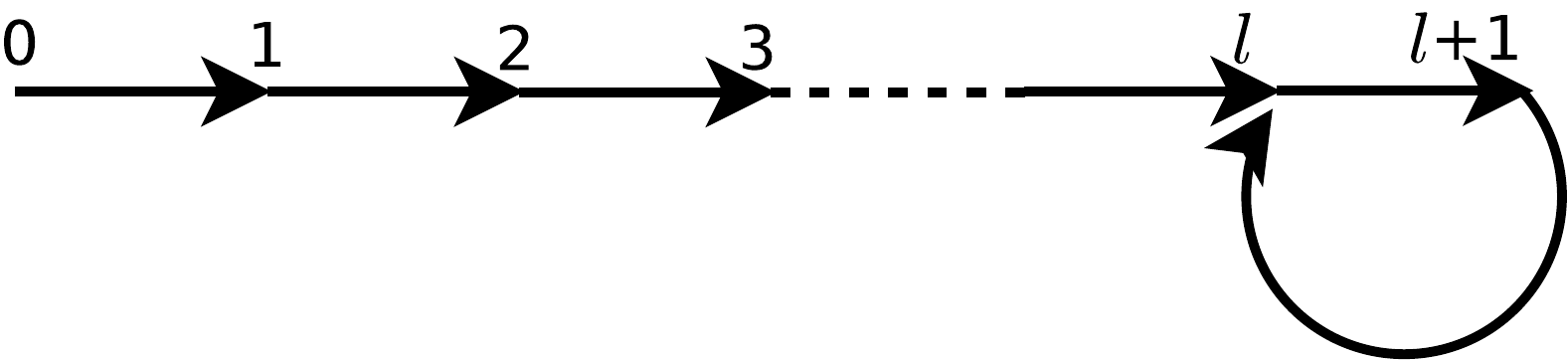}
\end{subfigure} 
\caption{Optimal adversaries in finite horizon (left) and geometric horizon 
(right)} 
\label{fig:char}
\end{figure}

\vspace{-0.4em}
\paragraph{Combining the primitives.} Here's how we create effective 
adversarial families from these primitives. In fact the families are often 
trivial, 
i.e., they have only one member and therefore there's nothing to optimize. 
We ignore the odd and even $k$ 
distinctions here for ease of description and just focus on the even $k$ case. 
Please see the technical sections 
for precise descriptions, which is only slightly different from what is 
here. 
\vspace{-0.4em}
\begin{enumerate}[leftmargin=*]
\itemsep-0.3em 
\item Perform $\frac{T-\ell}{2}$ loops and then $\ell$ straight-line steps, for 
$\ell = T^{3/4}$. Call this adversary $D_{lsdet}$ (stands for 
loop-straight-deterministic). Clearly, this adversarial 
family is simple-to-describe and there 
is nothing to optimize here as there is only one member in the family. Most 
importantly, it gives 
the precisely 
optimal regret for algorithmic families $\algsingle$ and $\algdec$ as $T 
\to\infty$. I.e.,
\[
\text{For families $\algfam = \algsingle, \algdec$:\qquad} 
R_{k}(\algfam,D_{lsdet}) 
= \sqrt{\frac{T\ln k}{2}} = R_{k}(\algfam,\advuniv).
\]
The best known regret lower bound for $\algsingle$ was 
$\frac{1}{4}\sqrt{T\log_2 k}$~\cite{GPS13}, which leaves a factor $2.35$ gap 
between upper and lower bounds, that our work closes. 
We are not aware of prior lower bounds for $\algdec$.
\item Perform $\frac{T-r}{2}$ loops and then $r$ straight-line steps, where $r$ 
is chosen uniformly at random from $\{0,1,\dots,T^{3/4}\}$. This family is 
simple and there is nothing to optimize here as well. Call 
this adversary $D_{lsrand}$ (denoting loop, straight, uniformly random). 
We show that when $\algfam = \algarb$ or when $\algfam = \algrand$: 
\[
\text{For families $\algfam = \algarb, \algrand$:\qquad} 
R_{k}(\algfam, D_{lsrand}) \ge \frac{2}{3}\sqrt{\frac{T\ln k}{2}} \ge
\frac{2}{3}R_{k}(\algfam, \advuniv).
\]
Note that while this lower bound doesn't precisely match the upper bound, 
the upper bound $R_k(\algfam,\advuniv) \leq \sqrt{(T\ln 
k)/2}$ and is 
likely even smaller for small $k$ (particularly for a large family of 
algorithms like $\algarb$ or $\algrand$) --- \emph{thus our result shows that 
the ratio between upper 
and lower bounds is at most 
$\frac{3}{2}$} and likely even smaller. 
To the best of our knowledge our lower bound is the first for the classes 
$\algarb$ and $\algrand$.
\item In geometric horizon, even for the family $\algsingle$ and 
at $k=2$ experts, instead of a single adversary 
working for all members of $\algsingle$, we have a single-parameter family of 
adversaries to optimize over. Namely, follow the straight-line primitive for 
$r$ steps and then the looping primitive for $\frac{T-r}{2}$ steps. Call this 
single-parameter family (parameterized by $r$) as 
$\mathcal{D}_{sl}$. 
The exact number $r$ is determined by optimizing it as a function of the 
parameter $\eta$ used by the algorithm in $\algsingle$. Specifically, for 
the case of $2$ experts we show that: 
$R_2(\algsingle, \mathcal{D}_{sl}) 
= 
\frac{0.391}{\sqrt{\delta}} = R_2(\algsingle,\advuniv).$
Note that $D_{sl}$ is again simple-to-describe and straightforward-to-optimize over. Further, it is the 
precisely optimal adversary family for not just $\algsingle$ but also the 
\emph{universe of all algorithms} $\alguniv$ (see 
Remark~\ref{rem:geom_2_general}), 
i.e., $R_2(\alguniv,\advuniv) = R_2(\alguniv, \mathcal{D}_{sl}).$
\item But in the geometric horizon setting, if we don't shoot for the 
precisely optimal adversary family, and aim for just approximately optimal, 
then we 
don't need a single-parameter family: just 
following one of the two 
looping/straight-line primitives gives a lower bound of 
$\frac{1}{2}\sqrt{\frac{\ln k}{2\delta}}$. 
Let $D_{\ell}, D_{s}$ be the looping and straight line primitives. Then:
$R_k(\algsingle,\{D_\ell,D_s\}) \geq 
\frac{1}{2}\sqrt{\frac{\ln k}{2\delta}} \geq 
\frac{1}{2}R_k(\algsingle,\advuniv).$
Note that while this lower bound doesn't precisely match the upper bound 
$R_k(\algsingle,\advuniv)$, the 
latter is at most\footnote{This is a simple extension of the 
standard proof that MWA has a 
regret upper bound of $\sqrt{(T\ln k)/2}$ in the finite horizon 
setting with $T$ steps and $k$ experts, and the realization that in the 
geometric horizon setting, the expected stopping time is $\frac{1}{\delta}$.} 
$\sqrt{\frac{\ln 
k}{2\delta}}$, which is at most a factor $2$ larger than lower bound. The only 
known regret lower bounds in the 
geometric horizon setting was what one could infer from the finite horizon 
setting lower 
bound of $\frac{1}{4}\sqrt{T\log_2 k}$~\cite{GPS13}, and it is not even clear 
what this 
exactly 
translates to in the geometric horizon setting. 
\end{enumerate}

\begin{remark}To give a sense that the primitives offer enough variety in 
combination, here is a simple modification over 
the adversary $D_{lsrand}$, 
that we call $D_{lsrand++}$: 
use $D_{lsrand}$ with 
probability $p$, 
and with probability $1-p$ play the looping primitive $D_{l}$ for all the $T$ steps. 
This increases the lower bound from $\frac{2}{3}\sqrt{(T\ln k)/2}$ to 
$0.68\sqrt{(T\ln k)/2}$ (see Theorem~\ref{thm:kExpertsLB_var}). We believe that 
this can be 
increased further 
by picking the stopping time for looping from a non-uniform distribution etc.
\end{remark}

\subsection{Motivation and discussion} In this work we seek to understand the 
structure of worst case input sequences for a broad family of algorithms and 
crisply expose 
their vulnerabilities. By identifying such structures, we also get the
precise regret suffered by them.  Our motivation in exploring this question
includes the following. 
\vspace{-0.4em}
\begin{enumerate}[leftmargin=*]
\itemsep-0.3em 
\item After 25 years since MWA was introduced~\cite{LW94, Vovk90}, we do
not have a sharp regret bound for it. $\algsingle$ is known to suffer a regret 
of at
most $\sqrt{\frac{T\ln k}{2}}$, but the best known lower bound on regret is
$\frac{1}{4}\sqrt{T\log_2 k}$~\cite{GPS13}, with a factor $2.35$ gap between
these two bounds. For larger families like $\algarb$, $\algrand$ no lower 
bounds were known. For an algorithm as widely used as MWA, it is fruitful to
have a sharp regret characterization. 
\item The patterns in the worst-case adversarial sequences that we
characterize are simple to spot if they exist (or even if anything close
exists), and make simple amends to the algorithm that result in 
significant gains.  
\item The problem is theoretically clean and challenging: how powerful are 
simple input patterns beyond the typically used pure random sequences in 
inflicting
regret?
\end{enumerate}

\paragraph{Related Work.} 
\textit{Classic works:} The book by~\citet{BL06-book} is an excellent source
for both applications and references for prediction with expert advice. The
prediction with experts advice paradigm was introduced by~\citet{LW94}
and~\citet{Vovk90}. The famous multiplicative weights update algorithm was
introduced independently by these two works: as the weighted majority algorithm
by~\citeauthor{LW94} and as the aggregating algorithm by~\citeauthor{Vovk90}.
The pioneering work of~\citet{BFHHSW97} considered $\{0,1\}$ outcome space for
nature and showed that for the absolute loss function $\ell(x,y) = |x-y|$ (or
$g(x,y) = 1-|x-y|$), the asymptotically optimal regret is $\sqrt{\frac{T\ln
k}{2}}$. This was later extended to $[0,1]$ outcomes for nature
by~\citet{HKW95}. The asymptotic optimality of $\sqrt{\frac{T\ln k}{2}}$ for
arbitrary loss (gain) functions follows from the analysis of~\citet{Bianchi99}.
When it is known beforehand that the cumulative loss of the optimal expert is
going to be small, the optimal regret can be considerably improved, and such
results were obtained by~\citet{LW94} and~\citet{FS97}. With certain
assumptions on the loss function, the simplest possible algorithm of following
the best expert already guarantees sub-linear regret~\citet{Hannan57}. Even
when the loss functions are unbounded, if the loss functions are exponential
concave, sub-linear regret can still be achieved~\citet{BK99}.

\textit{Recent works:}~\citet{GPS16} give the minimax optimal algorithm,
and the regret for the prediction with expert advice problem for the
cases of $k=2$ and $k=3$ experts. The focus of~\cite{GPS16} was providing a 
regret 
upper bound for the family of all algorithms, while the focus of this paper is 
to provide regret lower bounds for large families of algorithms.~\citet{LS14} 
consider a setting where the
adversary is restricted to pick gain vectors from the basis vector space
$\{\mathbf{e}_1,\dots,\mathbf{e}_k\}$.~\citet{AbernethyWY08} consider a
different variant of experts problem where the game stops when cumulative loss
of any expert exceeds given threshold. 
~\citet{ABRT08} consider general convex games and compute the minimax regret
exactly when the input space is a ball, and show that the algorithms
of~\citet{Zinkevich03} and~\citet{HKKA06} are optimal w.r.t. minimax
regret.~\citet{AABR09} provide upper and lower bounds on the regret of an
optimal strategy for several online learning problems without providing
algorithms, by relating the optimal regret to the behavior of a certain
stochastic process.~\citet{MS10} consider a continuous experts setting where
the algorithm knows beforehand the maximum number of mistakes of the best
expert.~\citet{RST10} introduce the notion of sequential Rademacher complexity
and use it to analyze the learnability of several problems in online learning
w.r.t. minimax regret.~\citet{RST11} use the sequential Rademacher complexity
introduced in~\cite{RST10} to analyze learnability w.r.t. general notions of
regret (and not just minimax regret).~\citet{RSS12} use the notion of
conditional sequential Rademacher complexity to find relaxations of problems
like prediction with static experts that immediately lead to algorithms and
associated regret guarantees. They show that the random playout strategy has a
sound basis and propose a general method to design algorithms as a random
playout. 
~\citet{Koolen13} studies the regret w.r.t. every expert, rather than just the
best expert in hindsight and considers tradeoffs in the
Pareto-frontier.~\citet{MA13} characterize the minimax optimal regret for
online linear optimization games as the supremum over the expected value of a
function of a martingale difference sequence, and similar characterizations for
the minimax optimal algorithm and the adversary.~\citet{MO14} study online
linear optimization in Hilbert spaces and characterize minimax optimal
algorithms.~\citet{CFH09} describe a parameter-free learning algorithm 
motivated 
by the cases of large number of experts $k$.~\citet{KE15} develop a prediction 
strategy called Squint, and prove bounds that incorporate both quantile and 
variance guarantees. 


\section{Finite horizon}
\label{sec:finite}
We begin our analysis of MWA by focusing on the simple case of $k=2$ experts. We first identify the structure of the optimal adversary, and through it we obtain the tight regret bound as $T \to \infty$. Before proceeding further, it is useful to recall that when the gains of the leading and lagging experts are given by $g+d$ and $g$, the MWA algorithm follows these experts with probabilities $\frac{e^{\eta d}}{e^{\eta d}+1}$ and $\frac{1}{e^{\eta d}+1}$ respectively. Thus, when the adversary increases $d$ by 1 i.e., increases the gain of the leading expert by $1$, the regret benchmark (namely, the gains of the leading expert) increases by $1$, where as MWA is correct only with probability $\frac{e^{\eta d}}{e^{\eta d}+1}$, and this therefore inflicts a regret of $\frac{1}{e^{\eta d}+1}$ on MWA. On the other hand, if the adversary decreases $d$ by 1, then the benchmark doesn't change, whereas MWA succeeds with probability $\frac{1}{e^{\eta d}+1}$, and this therefore inflicts a regret of $\frac{-1}{e^{\eta d}+1}$. When the adversary doesn't change $d$, the regret inflicted is $0$. 

\vspace{-0.4em}
\paragraph{Structure of the optimal adversary.} Let $\eta$ be the fixed update rate of the optimal MWA (the parameter in the exponent as explained in Section~\ref{sec:intro})\footnote{In fact, we can identify the optimal adversary for a much broader family of algorithms (see Appendix~\ref{app:two_conv_finite} for more details).}. Against a specific algorithm, an optimal adversary can always be found in the class of deterministic adversaries. The actions of the optimal adversary (against a specific MWA algorithm) depend only on the distance $d$ between leading and lagging experts and time step $t$. 
\begin{enumerate}
\itemsep-0.3em
\item \textbf{Loop aggregation:} At each time step, the adversary may either increase or decrease the gap $d$ by $1$, or leave $d$ unchanged. We denote these actions of the adversary by $d\overset{t}{\to} d+1$, $d\overset{t}{\to} d-1$, and $d\overset{t}{\to} d$. The respective regret values inflicted on the algorithm are given by  $\frac{1}{e^{\eta d}+1
}$, $\frac{-1}{e^{\eta d}+1}$, and $0$, which are all independent of the time when an action was taken. This means that if the adversary loops between $d$ and $d+1$ at several disconnected points of time, it may as well aggregate all of them and complete all of them in consecutive time steps. I.e., the optimal adversary starts at $d=0$ and then weakly monotonically increases $d$, stopping at various points $d=s$, looping for an arbitrary length of time between $d=s$ and $d=s-1$ and then proceeding forward. 
\item \textbf{Staying at same $d$ is dominated:} It is not hard to see that any action $x\rightarrow x$ is dominated for the adversary as this wastes a time step and inflicts $0$ regret on the algorithm. Thus the ``weakly monotonically increases'' in the previous paragraph can be replaced by ``strictly monotonically increases'' (except of course for the stopping points for looping). 
\item \textbf{Loop(0) domination:} Define $\Loop(d)\eqdef d\to d+1\to d$. It is easy to see that the regret inflicted by Loop($d$) is exactly $\frac{1}{e^{\eta d}+1}-\frac{1}{e^{\eta (d+1)}+1}$ and this quantity is maximized at $d=0$. Thus, the optimal adversary should replace all loops by loops at $0$. This gives us the structure claimed in Figure~\ref{fig:char} for the optimal adversary. 
\end{enumerate}

Given the optimal adversary's structure (as described in Figure~\ref{fig:char} 
) w.l.o.g. we can assume it to be looping for $\frac{T-\ell}{2}$ steps at $0$ 
and then monotonically increasing $d$ for $\ell$ steps at which point the game 
ends. 
In the following we will analyze the regret inflicted by the optimal adversary 
(which we showed was optimal for the class of algorithm $\algsingle$) against a 
broader class $\algdec$ of MWA. The regret of the adversary is: 


\begin{align}
\sum_{t=1}^{\frac{T-\ell}{2}}
\left[\frac{1}{2}-\frac{1}{e^{\eta(2t)}+1}\right]
+
\sum_{d=0}^{\ell-1}\frac{1}{e^{\eta(T-\ell+d+1)d}+1}.
\label{eq:mwa_regret_2_finite}
\end{align}  
\paragraph{Asymptotic regret of the optimal adversary.}  We first notice that 
for a fixed adversary with a given $\ell$, the regret of MWA with decreasing 
$\eta(t)$ in \eqref{eq:mwa_regret_2_finite} is greater than or equal to the 
regret of MWA with a constant $\eta'=\eta(T-\ell)$, i.e., 
\begin{align}
\frac{T-\ell}{2}\left[\frac{1}{2}-\frac{1}{e^{\eta'}+1}\right]+
\sum_{d=0}^{\ell-1}\frac{1}{e^{d\eta'}+1}.
\label{eq:mwa_regret_2_finite_constant}
\end{align}  
This is true as each individual term in \eqref{eq:mwa_regret_2_finite_constant} is equal to or smaller than the corresponding term in \eqref{eq:mwa_regret_2_finite}. In the following we are going to use $\ell=T^{3/4}$ for the adversary and for convenience, we write $e^{\eta'(T-\ell)}=\tau=1+\frac{\alpha}{\sqrt{T}}$. The two terms in \eqref{eq:mwa_regret_2_finite} together place strong bounds on what $\alpha$ should be: they imply that $\alpha = \Theta(1)$.  We show this in $2$ steps: first we show that $\alpha = O(1)$, and then show that $\alpha = \Omega(1)$.
\begin{enumerate}[leftmargin=*]
\item The first term in \eqref{eq:mwa_regret_2_finite_constant} forces $\alpha$ to be $O(1)$. The regret of MWA for $\ell=T^{3/4}$ is at least
$$\frac{T-\ell}{2}\left[\frac{1}{2}-\frac{1}{e^{\eta'}+1}\right]\simeq\frac{T}{2}\left[\frac{1}{2}-\frac{1}{e^{\eta'}+1}\right] = \frac{\alpha\sqrt{T}}{4(1+e^{\eta'})}= 
\frac{\sqrt{T}}{4(\frac{2}{\alpha}+\frac{1}{\sqrt{T}})}.$$ 
Since MWA's regret upper bound in the finite horizon model is
$\Theta\left(\sqrt{T}\right)$, $\alpha$ must be $O(1)$. 
\item To show that $\alpha = \Theta(1)$ we argue that the regret from the second term of \eqref{eq:mwa_regret_2_finite_constant} is $\omega(\sqrt{T})$ when $\alpha = o(1)$. 
For all $d \le \frac{\sqrt{T}}{\alpha}$, we have 
$\tau^d = (1+\frac{\alpha}{\sqrt{T}})^d \le e$. Thus MWA's regret for $k=\min(\frac{\sqrt{T}}{\alpha},T^{3/4})$ is at least
$$\sum_{d = 0}^{k-1} \frac{1}{\tau^d + 1} \ge \sum_{d = 0}^{k-1} \frac{1}{e+1} =\Omega(k)=\omega\left(\sqrt{T}\right).$$
Since MWA's regret upper bound in the finite horizon model is $\Theta\left(\sqrt{T}\right)$, we get $\alpha = \Omega(1)$.
\end{enumerate}

Now, we obtain the following asymptotic estimate for the second part of \eqref{eq:mwa_regret_2_finite_constant}, where $\eta'\sim e^{\eta'} - 1 = \frac{\alpha}{\sqrt{T}}$. 

\begin{align}
\sum_{d=0}^{\ell-1}\frac{1}{\tau^d+1}\sim\int_{0}^{\ell}\frac{\mathrm{d}x}{e^{\eta' x}+1}
=\frac{1}{\eta'}\ln\left(2\frac{e^{\ell\eta'}}{e^{\ell\eta'}+1}\right)\sim \frac{\sqrt{T}}{\alpha}\left(\ln(2)-\ln(1+e^{-\ell\eta'})\right).
\label{eq:mwa_regret_2_finite_straight}
\end{align}


The first part of \eqref{eq:mwa_regret_2_finite_constant} can be estimated as follows

\begin{equation}
\frac{T-\ell}{2}\left[\frac{1}{2}-\frac{1}{e^{\eta'}+1}\right]\sim\frac{T}{2}\left[\frac{e^{\eta'}-1}{2(e^{\eta'}+1)}\right]
\sim \frac{T}{2}\cdot\frac{{\eta'}}{4}=\frac{\alpha\sqrt{T}}{8}.
\label{eq:mwa_2_finite_second}
\end{equation}

As $e^{-\ell\eta'}=e^{-\alpha T^{1/4}}=o(1)$, \eqref{eq:mwa_regret_2_finite_straight} simplifies to $\frac{\ln(2)\sqrt{T}}{\alpha}$, while 
the estimate for \eqref{eq:mwa_2_finite_second} is $\frac{\alpha\sqrt{T}}{8}$. Now the estimate $\sqrt{T}\left[\frac{\ln(2)}{\alpha} + \frac{\alpha}{8}\right]$ 
for the regret in \eqref{eq:mwa_regret_2_finite_constant} is minimized for the choice of parameter $\alpha=\sqrt{8\ln(2)}$. Then the regret of the optimal MWA is at least $\sqrt{T\cdot\frac{\ln(2)}{2}}(1+o(1))$. It is known that there is MWA for $k=2$ experts with regret at most $\sqrt{T\cdot\frac{\ln(2)}{2}}$ (asymptotic in $T$). 
Thus, we obtain the following claim~\ref{thm:2ExpertsLB_finite} (in the claim 
below, by ``optimal MWA'' we mean the MWA with the optimally tuned 
$\eta(t)=\eta'$). 
\begin{claim}
\label{thm:2ExpertsLB_finite}
For $\algfam = \algsingle,\algdec$: $\qquad R_2(\algfam,D_{lsdet}) = 
\sqrt{\frac{T\ln 
2}{2}} = R_2(\algfam,\advuniv)$.
\end{claim}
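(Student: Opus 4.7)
The claim packages two inequalities. The upper bound $R_2(\algfam, \advuniv) \le \sqrt{(T \ln 2)/2}$ is the classical MWA analysis of~\cite{BFHHSW97} specialized to $k = 2$, which already holds for the more restrictive family $\algsingle \subseteq \algdec$, so the real task is the matching asymptotic lower bound $R_2(\algfam, D_{lsdet}) \ge \sqrt{(T \ln 2)/2}$. Since $D_{lsdet}$ is a single adversary and $\algsingle \subseteq \algdec$, it is enough to prove the lower bound against the specific $D_{lsdet}$ for the larger family $\algdec$. I would organize the argument around three steps: (i) identifying the loop-then-straight-line shape as worst-case, (ii) reducing the decreasing-$\eta$ analysis to a single constant $\eta'$, and (iii) optimizing the resulting one-parameter expression.

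For (i) and (ii), I would use the per-step regret breakdown already in hand: a move $d \to d \pm 1$ at rate $\eta$ contributes $\pm \tfrac{1}{e^{\eta d}+1}$. For $\algsingle$ (constant $\eta$), the three structural reductions from the discussion above (loop aggregation, dominance of $d \to d$, and Loop(0) beating Loop($d$) for $d > 0$) pin the worst adversary to the loop-then-straight-line shape of $D_{lsdet}$. The new ingredient for $\algdec$ is a term-by-term monotonicity argument on the fixed sequence $D_{lsdet}$: each loop in \eqref{eq:mwa_regret_2_finite} contributes $\tfrac{1}{2} - \tfrac{1}{e^{\eta(2t)}+1}$, monotone increasing in $\eta(2t)$, while each straight-line step contributes $\tfrac{1}{e^{\eta(T-\ell+d+1) d}+1}$, monotone decreasing in $\eta(T-\ell+d+1)$. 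Since $\eta$ is non-increasing in $\algdec$, all loop rates are at least $\eta' \eqdef \eta(T-\ell)$ and all straight-line rates are at most $\eta'$, so swapping the whole schedule for the constant $\eta'$ only decreases each term. This reduces the analysis to \eqref{eq:mwa_regret_2_finite_constant}.

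For (iii), I would set $\ell = T^{3/4}$ and parameterize $\tau \eqdef e^{\eta'} = 1 + \alpha/\sqrt{T}$. Two sanity bounds confine $\alpha$ to a bounded window: the loop term alone is $\Theta(\alpha \sqrt{T})$, forcing $\alpha = O(1)$ (otherwise we exceed the $O(\sqrt{T})$ upper bound); and if $\alpha = o(1)$, then $\tau^d \le e$ for all $d$ up to $\sqrt{T}/\alpha$, giving an $\omega(\sqrt{T})$ contribution from the straight-line sum. Once $\alpha = \Theta(1)$ is in hand, the straight-line sum passes cleanly to $\int_0^{\ell} \tfrac{dx}{e^{\eta' x}+1} \sim \tfrac{\sqrt{T} \ln 2}{\alpha}$ (using $e^{-\ell \eta'} = e^{-\alpha T^{1/4}} \to 0$), and the loop sum expands via $\tfrac{1}{2} - \tfrac{1}{e^{\eta'}+1} \sim \eta'/4$ to $\alpha \sqrt{T}/8$. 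Minimizing the total $\sqrt{T}\bigl(\tfrac{\ln 2}{\alpha} + \tfrac{\alpha}{8}\bigr)$ at $\alpha = \sqrt{8 \ln 2}$ yields exactly $\sqrt{(T \ln 2)/2}$.

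The main obstacle is step (ii): the three structural reductions that certify $D_{lsdet}$ as optimal were derived with a fixed $\eta$ in mind, so a priori the worst adversary could change its sequence in response to a time-varying $\eta(t)$. The per-step monotonicity argument rescues $\algdec$ because the two parts of $D_{lsdet}$ prefer opposite extremes in $\eta$ (large for looping, small for the straight line), and those preferences align perfectly when $\eta(t)$ is non-increasing. This alignment breaks the moment $\eta(t)$ is allowed to grow, which is exactly why the subsequent $\algarb$ and $\algrand$ proofs must switch to the randomized adversary $D_{lsrand}$ and only achieve the weaker $\tfrac{2}{3}$-factor lower bound.
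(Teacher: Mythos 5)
Your proposal is correct and follows essentially the same route as the paper: the structural reduction to the loop-then-straight-line adversary, the term-by-term replacement of the non-increasing schedule $\eta(t)$ by the constant $\eta'=\eta(T-\ell)$, the confinement $\alpha=\Theta(1)$ via the two sanity bounds, and the optimization of $\sqrt{T}\left(\frac{\ln 2}{\alpha}+\frac{\alpha}{8}\right)$ at $\alpha=\sqrt{8\ln 2}$ all match the paper's argument around \eqref{eq:mwa_regret_2_finite}--\eqref{eq:mwa_2_finite_second}. Your explicit monotonicity justification for the reduction to constant $\eta'$ is simply a spelled-out version of the paper's one-line term-by-term comparison.
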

We generalize the adversary for $k=2$ and obtain a tight lower bound for 
$\algdec$ matching the known upper bound for arbitrary {\em even} number $k$ of 
experts and almost 
matching bound for {\em odd} number $k$ of experts. Since $R_k(\algsingle) \geq 
R_k(\algdec)$, the lower bound in Theorem~\ref{thm:kExpertsLB_finite} below 
applies to $\algsingle$ as well. 

\begin{theorem}
\label{thm:kExpertsLB_finite}
For $\algfam = \algsingle,\algdec$: $\qquad$ i) For even $k$: 
$R_k(\algfam,D_{lsdet}) = 
\sqrt{\frac{T\cdot\ln k}{2}} = 
R_k(\algfam,\advuniv)$.$\qquad$ 
ii) For odd $k$: $R_k(\algfam,D_{lsdet}) \geq \sqrt{\frac{T\cdot\ln 
k}{2}\left(1-\frac{1}{k^2}\right)} \geq 
\sqrt{1-\frac{1}{k^2}}R_k(\algfam,\advuniv)$.
\end{theorem}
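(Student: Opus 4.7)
The plan is to generalize the two-expert analysis of Claim~\ref{thm:2ExpertsLB_finite} by partitioning the $k$ experts into two teams $A$ and $B$ of sizes $a = \lfloor k/2 \rfloor$ and $b = \lceil k/2 \rceil$. The adversary $D_{lsdet}$ performs $(T-\ell)/2$ full loops, in each of which it first advances every expert in $A$ and then every expert in $B$; it then performs $\ell = T^{3/4}$ straight-line steps advancing one fixed expert. By induction on the step, experts within the same team always have identical cumulative gains, so MWA splits its probability mass equally within a team, and the $k$-expert dynamics collapse to the evolution of the gap between the two teams in the looping phase, plus the single chosen expert's lead in the straight-line phase.

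I would first write down the exact regret. Before team $A$ is advanced the teams are tied and MWA's total probability on $A$ is $a/k$; after $A$ has been advanced the gap is $1$, and MWA assigns probability $\frac{b}{ae^\eta + b}$ to team $B$. Since the benchmark grows by exactly $1$ per full loop (every expert gains $1$), the per-loop regret is $\frac{b}{k} - \frac{b}{ae^\eta + b}$, and the straight-line step $d$ contributes $\frac{k-1}{e^{\eta d} + k-1}$. This is the $k$-expert analogue of~\eqref{eq:mwa_regret_2_finite}. The monotonicity reduction from the excerpt (the looping term is increasing in $\eta$ and the straight-line term is decreasing) carries over verbatim, so for any $A \in \algdec$ the regret is bounded below by the regret of the constant-$\eta' = \eta(T-\ell)$ MWA.

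Setting $\eta' = \alpha/\sqrt{T}$, the two-step argument from the excerpt gives $\alpha = \Theta(1)$: since $ab/k^2 = \Theta(1)$, the looping regret $\Theta(ab\,\alpha\,\sqrt{T}/k^2)$ forces $\alpha = O(1)$ by comparison with the $\sqrt{T\ln k/2}$ upper bound, and the straight-line sum forces $\alpha = \Omega(1)$. Taylor expansion yields $\frac{b}{k} - \frac{b}{ae^{\eta'}+b} \sim \frac{ab\,\eta'}{k^2}$, so the looping regret is $\sim \frac{ab\,\alpha\,\sqrt{T}}{2k^2}$. A partial-fractions integral gives
\begin{align*}
\sum_{d=0}^{\ell-1} \frac{k-1}{e^{\eta' d} + k-1} \;\sim\; \int_0^\ell \frac{k-1}{e^{\eta' x}+k-1}\,dx \;=\; \frac{1}{\eta'}\ln\frac{k\,e^{\ell\eta'}}{e^{\ell\eta'}+k-1} \;\sim\; \frac{\sqrt{T}\,\ln k}{\alpha},
\end{align*}
using $e^{-\ell\eta'}=o(1)$ exactly as in the excerpt. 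The total asymptotic regret $\sqrt{T}\bigl(\frac{ab\,\alpha}{2k^2} + \frac{\ln k}{\alpha}\bigr)$ is minimized at $\alpha = k\sqrt{2\ln k/(ab)}$ to give $\sqrt{\frac{T\ln k}{2}\cdot\frac{4ab}{k^2}}$. Since $4ab/k^2 = 1$ for even $k$ and $4ab/k^2 = 1 - 1/k^2$ for odd $k$, this is exactly the claimed lower bound; the equality in~(i) then follows from the matching upper bound $R_k(\algfam,\advuniv) \le \sqrt{T\ln k/2}$, and the second inequality in~(ii) follows from the same upper bound since $\sqrt{1-1/k^2}\cdot\sqrt{T\ln k/2} \ge \sqrt{1-1/k^2}\cdot R_k(\algfam,\advuniv)$.

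The main obstacle is simply the within-team symmetry bookkeeping: one must verify that MWA's distribution factors so that all team-$A$ experts (and all team-$B$ experts) genuinely share equal probability throughout both phases, so that the $k$-expert game reduces to the two-state dynamic analyzed above. Once this invariant is in place, the rest is a $k$-dependent rerun of the $k=2$ asymptotic optimization, with $\tfrac{1}{2}$ replaced by $a/k$, $b/k$ and $e^\eta + 1$ replaced by $ae^\eta + b$, and requires no new ideas beyond the calculation above.
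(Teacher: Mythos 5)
Your proposal is correct and follows essentially the same route as the paper: the same loop-then-straight adversary with $\ell = T^{3/4}$, the same reduction to a constant $\eta' = \eta(T-\ell)$ for $\algdec$, the same asymptotic estimates for the looping term ($\sim \frac{ab\,\alpha\sqrt{T}}{2k^2}$, which is \eqref{eq:mwa_2_finite_second} for even $k$ and \eqref{eq:mwa_k_finite_second_odd} for odd $k$) and the straight-line term \eqref{eq:mwa_regret_k_finite_straight}, followed by the same AM--GM optimization over $\alpha$. Your unified treatment via $a=\lfloor k/2\rfloor$, $b=\lceil k/2\rceil$ and the explicit within-team symmetry invariant merely packages the paper's separate even/odd cases into one calculation.
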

\begin{proof} Let $\eta(t)$ be the update rate of the optimal MWA, we define $\ell=T^{3/4}$ and $\eta'=\eta(T-\ell)$. We employ the following adversary for the even $k$ number of experts: 
\begin{enumerate}[leftmargin=*]
\itemsep-0.3em 
\item Divide all experts into two equal parties, numbered $A$ and $B$. For the first $\frac{T-\ell}{2}$ rounds ($\ell=T^{3/4}$), advance all the experts in party $A$ in even numbered rounds, and all experts in party $B$ in odd numbered rounds. 
\item For the remaining $\ell$ steps, pick an arbitrary expert and keep advancing just that expert.	
\end{enumerate}

Similar to \eqref{eq:mwa_regret_2_finite} this adversary obtains the regret of 
at least 
$\sum_{t=1}^{\frac{T-\ell}{2}}\left[\frac{1}{2}-\frac{1}{e^{\eta(2t)}+1}\right]
+\sum_{d=0}^{\ell-1}\frac{k-1}{e^{d\cdot\eta(T-\ell+d+1) }+k-1}$.
We further notice that similar to \eqref{eq:mwa_regret_2_finite_constant} the 
regret of MWA with decreasing $\eta(t)$ in the above expression is 
greater than or equal to the regret of MWA with a constant 
$\eta'=\eta(T-\ell)$, i.e., the previous expression is at least
\vspace{-0.4em}
\begin{align}
\frac{T-\ell}{2}\left[\frac{1}{2}-\frac{1}{e^{\eta'}+1}\right]
+\sum_{d=0}^{\ell-1}\frac{k-1}{e^{d\cdot\eta'}+k-1}.
\label{eq:mwa_regret_k_finite_constant}
\end{align}

\vspace{-0.4em}
We use \eqref{eq:mwa_2_finite_second} to estimate the first term of \eqref{eq:mwa_regret_k_finite_constant}.
We estimate the second term of \eqref{eq:mwa_regret_k_finite_constant} similar to \eqref{eq:mwa_regret_2_finite_straight} as follows.
\vspace{-0.4em}
\begin{align}
\sum_{d=0}^{\ell-1}\frac{k-1}{e^{d\eta'}+k-1}\sim\int_{0}^{\ell}\frac{(k-1)\mathrm{d}x}{e^{x\eta'}+k-1}
=\frac{1}{\eta'}\ln\left(\frac{k\cdot e^{\ell\eta'}}{e^{\ell\eta'}+k-1}\right)\sim 
\frac{\sqrt{T}\ln(k)}{\alpha}.
\label{eq:mwa_regret_k_finite_straight}
\end{align}
\vspace{-0.4em}
Now, combining these two estimates the regret from \eqref{eq:mwa_regret_k_finite_constant} is at least 
\vspace{-0.4em}
\[
\frac{\alpha\sqrt{T}}{8} + \frac{\sqrt{T}\ln(k)}{\alpha}
\ge 2\cdot \sqrt{\frac{\alpha\sqrt{T}}{8} \cdot \frac{\sqrt{T}\ln(k)}{\alpha}} = \sqrt{\frac{T\ln(k)}{2}},
\]
which precisely matches the upper bound on the regret of MWA\cite{BFHHSW97}.

For the odd $k$ number of experts we  employ almost the same adversary as for even $k$, although, since $k$ now is 
odd, we split experts into two parties of {\em almost} equal sizes (see Appendix~\ref{app:mwa_odd_finite} for full details).
\end{proof}

\subsection{General variations of MWA}
\label{sec:varMWA}
We have seen that the best known MWA with a flat learning rate $\eta$ achieves 
optimal (or almost optimal in the case of odd number of experts) regret among 
all MWAs with monotone decreasing learning rates $\eta(t)$. However, it seems 
that in the finite horizon model a better strategy for tuning parameters of MWA 
would be to use higher rates $\eta(t)$ towards the end $T$. In the following we 
study a broader family of MW algorithms $\algarb$ where learning parameter 
$\eta(t)$ can vary in an arbitrary way. 
In the following theorem we show that such adaptivity of MWA cannot decrease the regret of the algorithm by more than a factor of $2/3$.

\begin{remark} In fact, our analysis extends to the family $\algrand$ where 
each $\eta(t)$ can be a random variable drawn from a distribution $F_t$. 
Effectively, with a random $\eta(t)$ the algorithm player can get any convex 
combination $f(\Gainit[i][t-1],t)=\Ex[\eta(t)]{e^{\eta(t)\Gainit[i][t-1]}}$ of 
$e^{\eta(t)\Gainit[i][t-1]}$ in the vector of probabilities for following each 
expert $i$ at time $t$. 
This constitutes a much richer family of algorithms compared to the 
standard single parameter MWA family.   
\end{remark}

\begin{theorem}
\label{thm:algrand}
For $\algfam = \algarb,\algrand$:$\qquad$
i) For even $k$: $R_k(\algfam,D_{lsrand}) \geq  
\frac{2}{3}\sqrt{\frac{T\cdot\ln k}{2}} \geq \frac{2}{3}R_k(\algfam,\advuniv)$.
ii) For odd $k$: $R_k(\algfam,D_{lsrand}) \geq  
\frac{2}{3}\sqrt{\frac{T\cdot\ln k}{2}\left(1-\frac{1}{k^2}\right)} \geq 
\frac{2}{3}\sqrt{1-\frac{1}{k^2}}R_k(\algfam,\advuniv)$.
\end{theorem}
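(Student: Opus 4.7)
The plan is to proceed analogously to the proof of Theorem~\ref{thm:kExpertsLB_finite}, but now integrate the regret over each realization of the random $r$ in $D_{lsrand}$. Since $\algarb \subseteq \algrand$, it suffices to handle $\algrand$ directly. By the remark preceding the theorem, any algorithm in $\algrand$ is equivalent to one determined by a time-indexed family of functions $f_t(d) = \Ex[\eta(t) \sim F_t]{e^{\eta(t) d}}$ (after normalizing $f_t(0)=1$), where the algorithm plays expert $i$ at time $t$ with probability proportional to $f_t(G_{i,t-1})$. Every per-step regret identity used in the proof of Theorem~\ref{thm:kExpertsLB_finite} therefore goes through upon substituting $f_t(d)$ for $e^{\eta(t) d}$; writing $f_t(1) = 1 + \alpha(t)/\sqrt T$, the per-loop regret at round $2t$ is approximately $\alpha(2t)/(4\sqrt T)$ and the straight-line regret at the $d$-th step reaching absolute time $s$ is approximately $(k-1)/(f_s(d)+k-1)$.

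For each fixed $r$ the regret against the deterministic sequence decomposes exactly as in \eqref{eq:mwa_regret_k_finite_constant}. Taking expectation over $r$ uniform on $\{0,1,\dots,L\}$ and swapping the order of summation via $s=T-r+d+1$ allows each absolute round $s\in\{1,\dots,T\}$ to be treated as the primary index: loops at round $s$ contribute with weight $w_L(s) = \Pr_r[s\le T-r]$, and the straight-line step reaching $s$ contributes with weight $w_S(s)$ determined by the measure of $r$-values that place $s$ in the straight-line phase. The crucial step is then a pointwise AM-GM bound applied at each $s$ to the loop contribution $w_L(s)\alpha(s)/(4\sqrt T)$ paired with the integral approximation of the straight-line contribution, which by \eqref{eq:mwa_regret_k_finite_straight} behaves like $w_S(s)\ln k/(\alpha(s)\sqrt T)$. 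This yields a per-$s$ lower bound proportional to $\sqrt{w_L(s) w_S(s) \ln k / T}$, and summing (integrating) over $s\in[0,T]$, the normalized product $w_L(s)w_S(s)$ evaluates to an explicit kernel over $[0,1]$ whose square-root-integral gives $\int_0^1\sqrt{x}\,dx=\tfrac{2}{3}$; after combining with the $\sqrt{T\ln k/2}$ scaling this yields the claimed lower bound $\frac{2}{3}\sqrt{T\ln k/2}$.

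The principal obstacle is the careful bookkeeping of $w_L$ and $w_S$ (especially in the transition region where whether round $s$ lies in the loop or in the straight-line phase depends sensitively on $r$), together with verifying that the pointwise AM-GM is asymptotically tight for the optimal profile $\alpha(\cdot)$. The fact that the algorithm can allocate $\alpha(t)$ nonuniformly across rounds is exactly what the averaging over $r$ counteracts: no single profile $\alpha(\cdot)$ can simultaneously minimize both the loop and straight-line contributions at every $s$, and the geometric mean appearing in the AM-GM step captures this rigidity. For odd $k$, one uses the two-team split of sizes $\lceil k/2\rceil$ and $\lfloor k/2\rfloor$, exactly as in Theorem~\ref{thm:kExpertsLB_finite}, which introduces the $\sqrt{1-1/k^2}$ correction factor.
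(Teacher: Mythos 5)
Your overall architecture --- regroup the expected regret by absolute round $s$, pair the loop contribution at $s$ with the straight-line contribution at $s$ (which, after the change of variable $s=T-r+d+1$, involves only $\eta(s)$), apply AM--GM pointwise, and integrate a square-root kernel to get $2/3$ --- is the same as the paper's. But there is a gap that is fatal as written: the length $L$ of the randomization window. You take $r$ uniform on $\{0,\dots,L\}$ with the straight-line phase anchored at the end of the horizon. For your kernel to be supported on all of $[0,1]$ (which is what makes $\int_0^1\sqrt{x}\,\dd x=\tfrac23$ the answer at scale $\sqrt{T\ln k}$), you need $L=\Theta(T)$, in fact $L=T$ to get the constant. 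If instead $L=T^{3/4}$ --- the natural reading, since that is how the paper's introduction describes $D_{lsrand}$ and you cite that adversary --- then $w_S(s)=0$ for every $s<T-T^{3/4}$, the product $w_L(s)w_S(s)$ is supported on a window of length $T^{3/4}$ only, and your bound degrades to $O\bigl(\sqrt{T^{3/4}\ln k}\bigr)=o(\sqrt{T})$, which proves nothing. This failure is real, not an artifact of loose bounding: against that adversary, the algorithm in $\algarb$ that sets $\eta(t)=0$ for all $t\le T-T^{3/4}$ (paying zero loop regret there) and tunes $\eta(t)\approx\sqrt{\ln k/(T-t)}$ in the last $T^{3/4}$ rounds suffers total regret $O(T^{3/8}\sqrt{\ln k})$. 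The entire point of the randomization is to force the algorithm to keep $\eta(t)=\Theta(1/\sqrt{T})$ --- and hence bleed loop regret --- during a constant fraction of the horizon, and this requires the loop-to-straight transition time to be spread uniformly over a window of length $\Theta(T)$. The paper's actual proof does exactly this (and quietly differs from the introduction's description of $D_{lsrand}$): the number of loops $j$ is uniform over $[R]$ with $R\approx T/2$, and a straight-line phase of \emph{fixed} length $\ell=T^{3/4}$ starts at the random time $2j$, after which the adversary does nothing.

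Two smaller problems. First, your reduction of $\algrand$ to a deterministic rule via $f_t(d)=\Ex{e^{\eta(t)d}}$ is incorrect: a mixture of MWAs follows expert $i$ with probability $\Ex{e^{\eta G_i}/\sum_j e^{\eta G_j}}$, which is not proportional to $f_t(G_i)$; worse, for the loop terms Jensen goes the wrong way --- since $x\mapsto\frac12-\frac1{x+1}$ is concave, $\Ex{\frac12-\frac{1}{e^{\eta}+1}}\le\frac12-\frac{1}{\Ex{e^{\eta}}+1}$ --- so substituting $f_t$ \emph{overstates} the loop regret and cannot be used in a lower bound. The fix is simpler than your reduction: the adversary is oblivious, so the regret of a randomized-$\eta$ algorithm equals the expectation over realizations $(\eta(t))_t$ of the deterministic regret, and since the per-round AM--GM bound is uniform over realizations, taking expectations preserves it. Second, you flag but do not resolve the regimes where the approximations fail (the linearization of the loop term needs $\eta(t)=o(1)$, and the $\ln k/\eta(t)$ estimate of the straight sum needs $\eta(t)\cdot\ell\gg 1$); the paper disposes of these by showing that any $t$ with $\alpha(t)$ outside $\Theta(1)$ already contributes more than the AM--GM bound, and patches the rounds near $T$ (where the loop weight is small) by mixing in a pure-looping adversary with probability $\eps$.
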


\begin{proof}
Define $\ell=T^{3/4}$ and $R=\left[\frac{T-\ell-1}{2}\right]$. We use the 
following adversary for even number of experts $k$: 

\begin{enumerate}
\itemsep-0.3em
\item Choose $j\in [R]$ uniformly at random. With probability $0.5$ don't advance any expert in the first step.
\item Divide all experts into two equal parties, numbered $A$ and $B$.  For the next $j$ rounds, advance all the experts in party $A$ in even numbered rounds, and all experts in party $B$ in odd numbered rounds. 
\item For next $\ell$ steps, pick any expert $i$ and keep advancing 
just expert $i$. Do nothing in remaining steps.
\end{enumerate} 
\vspace{-0.4em}
The regret of the algorithm  is 
\vspace{-0.4em}
\begin{align}
\frac{1}{R}\sum_{j=0}^{R-1}\bigg[& 
\frac{1}{2}\left(\sum_{t=1}^{j}\left[\frac{1}{2}-\frac{1}{e^{\eta(2t)}+1}\right]+\sum_{d=0}^{\ell-1}\frac{k-1}{e^{d\cdot\eta(2j+d+1)}+k-1}\right)+\nonumber\\
& 
\frac{1}{2}\left(\sum_{t=1}^{j}\left[\frac{1}{2}-\frac{1}{e^{\eta(2t+1)}+1}\right]+\sum_{d=0}^{\ell-1}\frac{k-1}{e^{d\cdot\eta(2j+d+2)}+k-1}\right)
\bigg].
\label{eq:mwa_regret_k_var_complex} 
\end{align}
                                                                                                                                                                                                                                                                                                                                                                                                                                                                                                                                                                                                                                                                                                                                                                                                                                                             
Since $\eta(t)$ can be arbitrary nonnegative number, we break \eqref{eq:mwa_regret_k_var_complex} into
terms with the same $\eta(t)$ (we also drop a few terms to simplify the expression). 
In the following, we will also assume that $e^{\eta(t)}=1+\frac{\alpha(t)}{\sqrt{T}}$, where 
$\alpha(t) = \Theta(1)$ for every $t\in [T]$. Later we will explain why this assumption is without loss of generality. 
\vspace{-0.4em}
\begin{align} 
\eqref{eq:mwa_regret_k_var_complex} &\ge \frac{1}{2R}\sum_{t=\ell}^{T-\ell-1}\left(\left[R-\lceil
t/2\rceil\right]\cdot\left[\frac{1}{2}-\frac{1}{e^{\eta(t)}+1}\right]
+\sum_{d=0}^{\ell-1}\frac{k-1}{e^{\eta(t)d}+k-1}\right)\nonumber\\
&\simeq \sum_{t=\ell}^{T-\ell-1}\left(\left[\frac{R-\lceil t/2\rceil}{R}\right]\frac{\alpha(t)}{8\sqrt{T}}+\frac{\sqrt{T}\ln(k)}{2R\cdot\alpha(t)}\right)
\ge \sum_{t=\ell}^{T-\ell-1}2\sqrt{\left[\frac{R-\lceil t/2\rceil}{R}\right]\frac{\alpha(t)}{8\sqrt{T}}
\frac{\sqrt{T}\ln(k)}{2R\cdot\alpha(t)}}\nonumber\\
& =\sqrt{\frac{\ln(k)}{2\cdot 2R}}\sum_{t=\ell}^{T-\ell-1}\sqrt{\left[\frac{R-\lceil t/2\rceil}{R}\right]}
\simeq \sqrt{\frac{\ln(k)}{2\cdot T}}\int_{0}^{1}\sqrt{1-x}\quad\mathrm{d}x=\sqrt{\frac{\ln(k)}{2T}}\cdot \frac{2}{3}.
\label{eq:mwa_regret_k_var_derivation}
\end{align}
In the above derivation we obtain the first approximation $\simeq$ by using approximations from \eqref{eq:mwa_2_finite_second} and \eqref{eq:mwa_regret_k_finite_straight}. 

We now argue that the assumption $\alpha(t) = \Theta(1)$ is without loss of generality for every 
$t\in [T]$. We apply a similar argument as in Theorem~\ref{thm:2ExpertsLB_finite}, but now for each 
individual term with a particular $\eta(t)$. The term 
$\sum_{d=0}^{\ell-1}\frac{k-1}{e^{\eta(t)d}+k-1}$ in \eqref{eq:mwa_regret_k_var_derivation} is 
already large enough for the estimate when $\alpha(t)=o(1)$. The term 
$\left[R-\lceil t/2\rceil\right]\cdot\left[\frac{1}{2}-\frac{1}{e^{\eta(t)}+1}\right]$ also places a 
strong bound of $O(1)$ on $\alpha(t)$, when $\left[R-\lceil t/2\rceil\right]$ is constant fraction of 
$T$. To argue about $t$ close to the threshold $T$, we can slightly modify the adversary by playing with 
a small constant probability $\eps$ entirely ``looping'' strategy (without ``straight line'' part). This 
would make the coefficient in front of $\left[\frac{1}{2}-\frac{1}{e^{\eta(t)}+1}\right]$ to be 
sufficiently large, and at the same time would decrease the lower bound by at most $1-\eps$ factor. 
Taking $\eps$ arbitrary small we obtain the bound in \eqref{eq:mwa_regret_k_var_derivation}. This concludes the proof for the even number of experts

For the odd number of experts $k$. We slightly modify the adversary analogous to the case of odd number of experts in Theorem~\ref{thm:kExpertsLB_finite}. This gives us an additional factor of $1-\frac{1}{k^2}$ for each of the looping terms.
\end{proof}
\vspace{-0.6em}
\begin{remark}
One can slightly improve the lower bound in Theorem~\ref{thm:algrand} 
and get a better factor than $\frac{2}{3}$. To this end we employ a more 
complicated adversary by playing with some probability $p>0$ the same strategy 
as in Theorem~\ref{thm:algrand} and with the remaining $1-p$ probability playing
purely looping strategy (see Appendix~\ref{app:mwa_gen}).
\end{remark}

\section{Geometric horizon}
\label{sec:delta}
We prove two main results in this Section. We derive the structure of the 
optimal adversary for $2$ experts and show that the optimal regret for $2$ 
experts is exactly $\frac{0.391}{\sqrt{\delta}}$ as $\delta \to 0$ (see 
Appendix~\ref{app:mwa_delta_two}). For an arbitrary number of experts $k$, we 
derive a regret lower bound of $\frac{1}{2}\sqrt{\frac{\ln(k)}{2\delta}}$ (see 
Appendix~\ref{app:mwa}).

\bibliographystyle{plainnat}
\bibliography{bibs,machine_learning}

\appendix

\section{Finite horizon}
\subsection{Theorem~\ref{thm:kExpertsLB_finite} for odd number of experts $k$}
\label{app:mwa_odd_finite}
\begin{oneshot}{Theorem~\ref{thm:kExpertsLB_finite}} 
For $\algfam = \algsingle,\algdec$: $\qquad$ i) For even $k$: 
$R_k(\algfam,D_{lsdet}) = 
\sqrt{\frac{T\cdot\ln k}{2}} = 
R_k(\algfam,\advuniv)$.$\qquad$ 
ii) For odd $k$: $R_k(\algfam,D_{lsdet}) \geq \sqrt{\frac{T\cdot\ln 
k}{2}\left(1-\frac{1}{k^2}\right)} \geq 
\sqrt{1-\frac{1}{k^2}}R_k(\algfam,\advuniv)$.
\end{oneshot}
\begin{proof}
We have already proven this theorem for even $k$. So let $k=2\cdot m + 1$ and 
let $\eta(t)$ be as before the update rate (non increasing in $t$) of the 
optimal MWA. We employ almost the same adversary as for even $k$, although, 
since $k$ now is odd, we split experts into two parties of {\em almost} equal 
sizes. 
As in the case of even $k$ we choose $\ell=T^{3/4}$ and let $\eta'=\eta(T-\ell)$. 
\begin{enumerate}[leftmargin=*]
\itemsep-0.3em 
\item divide all experts into two parties of sizes $m$ and $m+1$ respectively, advance all experts within the party of size $m$ in one step then advance all
			experts	within the other party of size $m+1$ in the next step, repeat these cycles of two steps $\frac{T-\ell}{2}$ times.    
\item fix on one expert and keep advancing just that expert for the remaining $\ell$ steps.	
\end{enumerate}

This adversary obtains the regret of at least

\begin{align}
\frac{T-\ell}{2}\left[\frac{m+1}{2m+1}-\frac{m+1}{m\cdot e^{\eta'}+m+1}\right]+
\sum_{d=0}^{\ell-1}\frac{k-1}{e^{d\eta'}+k-1}.
\label{eq:mwa_regret_k_finite_odd}
\end{align}  

 We use the estimate \eqref{eq:mwa_regret_k_finite_straight}
for the second part of \eqref{eq:mwa_regret_k_finite_odd}. For the first part of \eqref{eq:mwa_regret_k_finite_odd} 
we closely follow the derivation in \eqref{eq:mwa_2_finite_second} and obtain the following estimate.

\begin{equation}
\frac{T-\ell}{2}\left[\frac{m+1}{2m+1}-\frac{m+1}{m\cdot e^{\eta'}+m+1}\right]
\sim\frac{T}{2}\cdot\frac{(m+1)\cdot m\cdot\alpha/\sqrt{T}}{(2m+1)^2}
=\frac{\alpha\sqrt{T}(k-1)(k+1)}{8 k^2}.
\label{eq:mwa_k_finite_second_odd}
\end{equation}

Therefore, the regret from \eqref{eq:mwa_regret_k_finite_odd} is at least 

\[
\frac{\alpha\sqrt{T}}{8}\cdot\left(1-\frac{1}{k^2}\right) + \frac{\sqrt{T}\ln(k)}{\alpha}
\ge 2\cdot \sqrt{\frac{\alpha\sqrt{T}}{8}\left(1-\frac{1}{k^2}\right) \cdot \frac{\sqrt{T}\ln(k)}{\alpha}} = \sqrt{\frac{T\ln(k)}{2}\left(1-\frac{1}{k^2}\right)},
\]
which concludes the proof of the theorem.
\end{proof}
\subsection{Improved lower bound for $\algrand$.}
\label{app:mwa_gen}
We use a slightly more complicated adversary than the one used in 
Theorem~\ref{thm:algrand} to get a better regret. Call this adversary (which we 
describe in the proof of the theorem below) 
$D_{lsrand++}$. We show:  
\begin{theorem}
\label{thm:kExpertsLB_var}
For $\algfam = \algarb,\algrand$:$\qquad$
i) For even $k$: $R_k(\algfam,D_{lsrand++}) \geq  
0.68\sqrt{\frac{T\cdot\ln k}{2}} \geq 0.68 R_k(\algfam,\advuniv)$.
ii) For odd $k$: $R_k(\algfam,D_{lsrand++}) \geq  
0.68 \sqrt{\frac{T\cdot\ln k}{2}\left(1-\frac{1}{k^2}\right)} \geq 
0.68 \sqrt{1-\frac{1}{k^2}}R_k(\algfam,\advuniv)$.
\end{theorem}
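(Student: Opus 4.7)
The plan is to construct a mixed adversary $D_{lsrand++}$ that, with probability $p \in (0,1]$, plays exactly the random adversary $D_{lsrand}$ from Theorem~\ref{thm:algrand}, and with the remaining probability $1-p$ plays the pure looping primitive $D_{\ell}$ (partition the experts into two parties of sizes $\lfloor k/2\rfloor$ and $\lceil k/2\rceil$ and advance them alternately for all $T$ steps, together with a first-step coin flip to symmetrize the parity). The parameter $p$ will be optimized at the end to obtain the $0.68$ constant.

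The key observation is that the AM-GM inequality used in Theorem~\ref{thm:algrand} is applied \emph{pointwise} at every time step $t$ between a ``loop coefficient'' and a ``straight-line coefficient''. In the proof of Theorem~\ref{thm:algrand}, the loop coefficient at step $t$ was $\frac{R-\lceil t/2\rceil}{R}$, which degrades linearly from $1$ to $0$ and caused the loss of a $\tfrac{2}{3}$ factor via the integral $\int_0^1\sqrt{1-x}\,dx = \tfrac{2}{3}$. In the mixed adversary, the pure looping part contributes an additional fraction $1-p$ at \emph{every} step, so the loop coefficient at step $t$ becomes $a(t) = (1-p) + p\cdot\tfrac{R-\lceil t/2\rceil}{R}$, while the straight-line coefficient is simply scaled by $p$. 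Applying the same pointwise AM-GM to $a(t)\cdot\tfrac{\alpha(t)}{8\sqrt{T}} + p\cdot\tfrac{\sqrt{T}\ln k}{2R\,\alpha(t)}$ and replacing the sum by an integral (as in~\eqref{eq:mwa_regret_k_var_derivation}) yields
\[
R_k(\algfam, D_{lsrand++}) \;\gtrsim\; \sqrt{\tfrac{T\ln k}{2}}\cdot \frac{2\bigl[1-(1-p)^{3/2}\bigr]}{3\sqrt{p}}\cdot\sqrt{1-\tfrac{[k\text{ odd}]}{k^2}},
\]
where the last factor comes from using the odd-$k$ modification from Appendix~\ref{app:mwa_odd_finite} inside both components of the mixture.

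The remaining task is to maximize $f(p) = 2[1-(1-p)^{3/2}]/(3\sqrt{p})$ over $p\in(0,1]$. Substituting $y=\sqrt{1-p}$ reduces the first-order condition $f'(p)=0$ to $2y^3 - 3y + 1 = 0$, which factors as $(y-1)(2y^2+2y-1)=0$, giving the non-trivial root $y = (\sqrt{3}-1)/2$ and hence $p^\star = \sqrt{3}/2 \approx 0.866$. Evaluating $f$ at $p^\star$ yields $f(p^\star) \approx 0.6813 > 0.68$, establishing the claimed constant for both even and odd $k$.

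The main obstacle is a bookkeeping one rather than a conceptual one: I have to argue that the assumption $\alpha(t) = \Theta(1)$ remains without loss of generality for the enriched adversary, and verify that the first-step coin flip used to symmetrize $D_{\ell}$ produces the \emph{same} $\tfrac{\alpha(t)}{8\sqrt{T}}$ per-step loop contribution that $D_{lsrand}$ already enjoys, so the two weights genuinely add at each $t$. For the former, note that the pure looping component alone already places a strong $O(1)$ bound on every $\alpha(t)$ because its coefficient $(1-p)$ is a constant fraction of $T$, so the $\eps$-perturbation used at the end of Theorem~\ref{thm:algrand} is no longer needed. For the latter, a direct parity check on the pure looping adversary with the first-step coin flip shows that step $t$ contributes in expectation $\tfrac{1}{2}\bigl(\tfrac{1}{2}-\tfrac{1}{e^{\eta(t)}+1}\bigr)\sim \tfrac{\alpha(t)}{8\sqrt{T}}$, which is exactly the integrand already present in~\eqref{eq:mwa_regret_k_var_derivation}.
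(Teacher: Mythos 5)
Your proposal is correct and follows essentially the same route as the paper's own proof in Appendix~\ref{app:mwa_gen}: the same mixture of $D_{lsrand}$ with probability $p$ and the pure looping primitive with probability $1-p$, the same pointwise AM-GM with loop coefficient $p\frac{R-\lceil t/2\rceil}{R}+1-p$, the same integral $\frac{2}{3\sqrt{p}}\bigl(1-(1-p)^{3/2}\bigr)$, and the same optimum $p=\sqrt{3}/2\approx 0.866$ giving a value slightly above $0.68$. Your explicit factoring of the first-order condition and the observation that the constant-probability looping component makes the $\eps$-perturbation for controlling $\alpha(t)$ unnecessary are minor refinements consistent with the paper's remarks.
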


\begin{proof}
We closely follow the proof of Theorem~\ref{thm:algrand}, although now we 
employ a more complicated adversary by following with a probability $p>0$ the 
same strategy as in Theorem~\ref{thm:algrand} and with the remaining 
$1-p$ probability playing purely looping strategy (without ``straight line'' 
part) as follows.
\begin{enumerate}
\item With probability $0.5$ don't advance any expert in the first step.
\item Divide all experts into two equal (almost equal, when $k$ is odd ) parties, numbered $A$ and $B$.  For the next $T$ rounds, advance all the experts in party $A$ in even numbered rounds, and all experts in party $B$ in odd numbered rounds. 
\end{enumerate} 

The regret of the MWA for even $k$ with respect to this adversary is
\begin{equation}
p\cdot\eqref{eq:mwa_regret_k_var_complex} + (1-p)\cdot\left(
\frac{1}{2}\sum_{t=1}^{\left\lfloor\frac{T}{2}\right\rfloor}\left[\frac{1}{2}-\frac{1}{e^{\eta(2t)}+1}\right]+\frac{1}{2}\sum_{t=1}^{\left\lfloor\frac{T-1}{2}\right\rfloor}\left[\frac{1}{2}-\frac{1}{e^{\eta(2t+1)}+1}\right]\right)
\label{eq:mwa_regret_k_var_complex_looping}
\end{equation}
We closely follow derivation \eqref{eq:mwa_regret_k_var_derivation} and assume
that $e^{\eta(t)}=1+\frac{\alpha(t)}{\sqrt{T}}$, where $\alpha(t) = \Theta(1)$ for every $t\in [T]$.
We also recall that $R=\left[\frac{T-\ell-1}{2}\right]$.
                                                                                                                                                                                                                                                                                                                                                                                                                                                                                                                                                                                                                                                                                                                                                                                                                                                          
\begin{align} 
\eqref{eq:mwa_regret_k_var_complex_looping} &\gtrsim 
\sum_{t=\ell}^{T-\ell-1}\left(\left[p\cdot\frac{R-\lceil t/2\rceil}{R}+1-p\right]\frac{\alpha(t)}{8\sqrt{T}}+p\cdot\frac{\sqrt{T}\ln(k)}{2R\cdot\alpha(t)}\right)
\nonumber\\
&\ge \sum_{t=\ell}^{T-\ell-1}2\sqrt{\left[p\cdot\frac{R-\lceil t/2\rceil}{R}+1-p\right]\frac{\alpha(t)}{8\sqrt{T}}\cdot
p\cdot\frac{\sqrt{T}\ln(k)}{2R\cdot\alpha(t)}}\nonumber\\
& \simeq \sqrt{\frac{p\ln(k)}{2\cdot T}}\int_{0}^{1}\sqrt{p(1-x)+1-p}\quad\mathrm{d}x=
\sqrt{\frac{\ln(k)}{2T}}\cdot \frac{2}{3\sqrt{p}}\left(1-(1-p)^{3/2}\right).
\label{eq:mwa_regret_k_var_derivation_looping}
\end{align}
The right hand side of expression \eqref{eq:mwa_regret_k_var_derivation_looping} is maximized for
$p\sim 0.866$ at a value slightly larger than $0.68$. Note that we could assume that $\alpha(t) = \Theta(1)$, as otherwise either the ``looping'' term, or the ``straight-line'' respective term for the particular $\eta(t)$ would be greater than the bound we used in \eqref{eq:mwa_regret_k_var_derivation_looping}.

The derivation for the odd number of experts $k$ is almost the same with an additional factor of $1-\frac{1}{k^2}$ for each of the looping terms.
\end{proof}
\subsection{Optimal adversary for two experts for a broader family than MWA.}
\label{app:two_conv_finite}
Our goal in this section is to identify the structure of the optimal adversary for a broader family of algorithms than MWA. We consider a simple case of $k=2$ experts. We assume that in general any algorithm  is parametrized by the distance $d$ between lagging and leading experts at time $t$ and picks the lagging expert with probability $p(d,t)$ and the leading expert with probability $1-p(d,t)$. Thus, when the adversary increases $d$ by 1, i.e., increases the gain of the leading expert by $1$, the regret benchmark (namely, the gains of the leading expert) increases by $1$, where as the algorithm is correct only with probability $1-p(d,t)$, and this therefore inflicts a regret of $p(d,t)$ on the algorithm. On the other hand, if the adversary decreases $d$ by 1, then the benchmark doesn't change, whereas  the algorithm succeeds with probability $p(d,t)$, and this therefore inflicts a regret of $-p(d,t)$. When the adversary doesn't change $d$, the regret inflicted is $0$. For $k=2$ experts we consider a family of algorithms $\algconv$ given by the following two properties:
\begin{itemize}
\item[(i)]  $p(d,t)$ does not decrease with $t$ for a fixed $d$; 
\item[(ii)] $p(d,t)$ is convex and decreasing in variable $d$ for a fixed $t$, i.e., 
            $p(d+1,t)+p(d-1,t)\ge 2p(d,t)$ and $p(d,t)\le p(d-1,t)$ for any $d\ge 1$.
\end{itemize}
Note that family $\algconv$ contains $\algdec$ for $k=2$ experts.
 
Against a specific algorithm, an optimal adversary can always be found in the class of deterministic adversaries. The actions of the optimal adversary (against a specific algorithm) depend only on the distance $d$ between leading and lagging experts and time step $t$. At each time step, the adversary may either increase or decrease the gap $d$ by $1$, or leave $d$ unchanged. We denote these actions of the adversary by $d\overset{t}{\to} d+1$, $d\overset{t}{\to} d-1$, and $d\overset{t}{\to} d$. 
\begin{enumerate}
\item \textbf{Pushing $\mathbf{d\to d}$ to $\mathbf{d=0}$}: we observe that at any time $t$ two consecutive actions $d\overset{t}{\to} d+1$ and $d+1\overset{t+1}{\to} d+1$ give at most as much regret as $d\overset{t}{\to} d$ and $d\overset{t+1}{\to} d+1$, since $p(d,t)+0\le 0 +p(d,t+1)$. This means that the optimal adversary can always use the latter pair of actions instead of the former pair of actions (the rest of the actions are unaffected) and inflict at least as much regret on the algorithm.
\item \textbf{Pushing loops $\mathbf{d\to d+1\to d}$ to $\mathbf{0\to 1\to 0}$}:
Let $t_0+1$ be the first time the optimal adversary has decreased distance $d$ between lagging and leading experts, i.e., the adversary played $d+1\overset{t_0+1}{\to} d$. We may assume by the previous observation that the adversary plays the incremental actions only at the end of the $[t_0]$ interval, i.e., he first plays $0\overset{t}{\to} 0$ for $1\le t<t_0-d$ times and then plays $0\overset{t_0-d}{\to}1\overset{t_0-d+1}{\to} 2\overset{t_0-d+2}{\to}\dots\overset{t_0}{\to}d+1\overset{t_0+1}{\to}d$. Suppose that $d>0$, then the adversary would not decrease the regret by substituting actions 
$d\overset{t_0}{\to}d+1\overset{t_0+1}{\to}d$ with $d\overset{t_0}{\to}d-1\overset{t_0+1}{\to}d$. Indeed,
using properties (ii) and (i) for the algorithm we get
\[
p(d-1,t_0+1)+ p(d+1,t_0+1) \ge 2p(d,t_0+1)\ge 2p(d,t_0).  
\]
Equivalently, $\text{Regret}(d\overset{t_0}{\to}d+1\overset{t_0+1}{\to}d)=p(d,t_0)-p(d+1,t_0+1)\le -p(d,t_0)+p(d-1,t_0+1)=\text{Regret}(
d\overset{t_0}{\to}d-1\overset{t_0+1}{\to}d)$. Furthermore, the play of adversary is dominated by
$0\overset{t_0-d}{\to}1\overset{t_0-d+1}{\to}0\overset{t_0-d+2}{\to}1\overset{t_0-d+3}{\to}2\dots\overset{t_0}{\to}d$. The same argument applied to the time $t$ after $t_0-d+1$ allows us 
to say that all actions $d\to d-1$ (together with some previous $d-1\to d$ action) are dominated by the loops $0\to 1\to 0$. To this end, we can conclude that 
adversary play is dominated by a sequence: $0\to 0\to\dots\to 0\to 1\to 0\to\dots\to 0\to\dots 0
\to 1\to 2\to\dots\to d$.
\item \textbf{$\mathbf{0\to 0}$ is dominated}: We observe that at any time $t$ the sequence $0\overset{t}{\to}1\overset{t+1}{\to}0
\overset{t+2}{\to}0$ dominates $0\overset{t}{\to}0\overset{t+1}{\to}1\overset{t+2}{\to}0$. Indeed, 
\[
p(0,t)-p(1,t+1) = 0.5 -p(1,t+1) \ge 0.5 -p(1,t+2) = p(0,t+1)-p(1,t+2). 
\]
Thus, adversary plays all sequences $0\to 1\to 0$ before $0\to 0$. Finally, two consecutive actions 
$0\overset{t}{\to}0$, $0\overset{t+1}{\to}0$ are dominated by $0\overset{t}{\to}1\overset{t+1}{\to}0$, so
the optimal adversary would use at most one action of the form $0\to 0$.
\end{enumerate}

Without loss of generality, the optimal adversary can be assumed to be looping 
for $\frac{T-\ell}{2}$ steps at $0$ ($0\to 1\to 0$) and then monotonically 
increasing $d$ for $\ell$ steps at which point the game ends\footnote{The 
optimal adversary could also use one extra single step loop $0\to 0$, which is 
not important for our asymptotic analysis as $T$ goes to infinity.}. 
\section{Geometric horizon}
\subsection{Asymptotic regret of the optimal adversary for $2$ experts.}
\label{app:mwa_delta_two}

We begin our analysis of MWA with the case of $2$ experts and first identify 
the structure of the optimal adversary. 
This adversary turns out to be in some sense the opposite to the optimal 
adversary in the finite horizon model: it 
keeps increasing the gap between lagging and leading experts for the first few 
rounds and only then enters the looping
phase until the process stops --- we denote this by $\mathcal{D}_{sl}$.  
\paragraph{Structure of the optimal adversary.} 
In the geometric horizon model the actions of the optimal adversary against any 
specific algorithm depend only on the distance $d$ between leading and lagging 
experts and don't depend on the time step. Thus to describe the optimal 
adversary, one just needs to specify what will the lag in the next step be, 
given that it is $d$ in this step.  Further, the optimal adversary at no value 
of $d$ will decide to maintain the same $d$ in next step, i.e., it advances 
exactly one expert at a time, thereby increasing $d$ by $1$ or decreasing $d$ 
by $1$ in each step. Without loss of generality we assume that the optimal 
adversary advances the leading expert for the first $\ell+1$ steps ($\ell$ may 
be infinite) and at step $\ell+1$ advances the lagging expert. At that point 
$d=\ell$, and since the optimal adversary advanced the leading expert the first 
time $d$ was equal to $\ell$, it will do the same now too. Extending this 
reasoning to all remaining steps we conclude that the optimal adversary follows 
straight line strategy for the first $\ell$ steps and at that point when $d = 
\ell$, switches to the looping strategy for the remaining time with $d$ looping 
between $\ell$ and $\ell+1$. 
\begin{remark}
\label{rem:geom_2_general}
Note that the above structural reasoning is applicable not just for the family 
$\algsingle$, but for the entire universe of algorithms $\alguniv$. While the 
following precise calculations on regret are for the family $\algsingle$ of 
MWA, the very simple adversary structure is fully general and applies to 
$\alguniv$ also, i.e., $R_2(\alguniv,\mathcal{D}_{sl}) = R_2(\alguniv, 
\advuniv)$.
\end{remark}

For a particular MWA with a parameter $\eta$ the optimal adversary achieves the 
regret of

\begin{align}
\label{eq:mwa_regret_2}
\sum_{d=0}^{\ell-1}\frac{(1-\delta)^{d+1}}{e^{\eta 
d}+1}+(1-\delta)^{\ell+1}\sum_{k=0}^{\infty}\left[\frac{(1-\delta)^{2k}}{e^{\eta\ell}+1}
-\frac{(1-\delta)^{2k+1}}{e^{\eta(\ell+1)}+1}\right].
\end{align}

We analyze the asymptotic regret of the optimal adversary given in the 
above expression and compute the regret of the optimal algorithm in case of 
$k=2$ experts for the family $\algsingle$ in Theorem~\ref{thm:2ExpertsLB} 
below. 
\begin{remark}
\label{rem:GeomVsFinite}
In contrast to the finite horizon model, the analysis of looping and 
straight-line
strategies of the optimal adversary in geometric horizon are related and cannot 
be decomposed into 
two independent quantities. This is because the looping phase comes after the 
straight-line phase in the geometric horizon model, and therefore the regret it 
inflicts 
is strongly influenced by the 
number of straight-line steps that have passed. On the other hand, in the 
finite horizon model, the straight-line phase follows the looping phase, and 
the regret inflicted by the straight-line phase is almost independent of the 
number of loops that have passed (the only dependence is via the number of 
rounds in the game that remain and it does not matter for asymptotics).  This 
fundamental 
difference in structure manifests in how the optimal regret values compare in 
these two settings: where as in finite horizon we have a regret value of 
$\sqrt{\frac{T\ln 2}{2}}$, in the geometric horizon setting instead of having 
the equivalent $\sqrt{\frac{\ln 2}{2\delta}}$, we get the optimal regret to be 
$\frac{0.391}{\sqrt{\delta}}$ --- the former is 50.5\% larger than the latter. 
In other words, it is more difficult for the adversary to inflict regret in the 
geometric horizon setting. 
\end{remark}

\begin{theorem}
\label{thm:2ExpertsLB}
In geometric horizon, $R_2(\algsingle,\mathcal{D}_{sl}) = 
\frac{0.391}{\sqrt{\delta}} = R_2(\algsingle,\advuniv)$. 
\end{theorem}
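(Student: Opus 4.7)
The plan is to evaluate the regret expression \eqref{eq:mwa_regret_2} in the $\delta\to 0$ limit under the scaling $\eta = a\sqrt{\delta}$ and $\ell = L/\sqrt{\delta}$, then solve the resulting two-variable minimax problem. Arguments parallel to those just before Theorem~\ref{thm:2ExpertsLB_finite} (which showed $\alpha=\Theta(1)$) justify that $a,L=\Theta(1)$ at the optimum: if $a$ vanishes the looping term blows up faster than $1/\sqrt{\delta}$, and if $aL$ vanishes the straight-line term does; both contradict the known $O(1/\sqrt{\delta})$ upper bound for MWA.

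First I would simplify each of the two summands in \eqref{eq:mwa_regret_2}. Since $\delta\ell = L\sqrt{\delta}\to 0$, the factor $(1-\delta)^{d+1}$ tends uniformly to $1$ for $d\le\ell$, so the straight-line sum becomes a Riemann approximation of
\[
\int_{0}^{\ell}\frac{\dd x}{e^{\eta x}+1} \;=\; \frac{1}{\eta}\ln\!\left(\frac{2e^{\eta\ell}}{e^{\eta\ell}+1}\right).
\]
For the looping tail, I would close the geometric series using $\sum_{k=0}^{\infty}(1-\delta)^{2k} = \frac{1}{1-(1-\delta)^{2}}\sim\frac{1}{2\delta}$, then expand $e^{\eta(\ell+1)} = e^{\eta\ell}(1+\eta+O(\eta^{2}))$ and $(1-\delta)^{\ell+1}\to 1$ to obtain the leading behavior $\frac{\eta\,e^{\eta\ell}}{2\delta(e^{\eta\ell}+1)^{2}}$, with the $O(\delta)$ pieces of the numerator contributing only a harmless $O(1)$. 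Writing $y = aL = \eta\ell$, the asymptotic regret collapses to
\[
\frac{1}{\sqrt{\delta}}\, f(a,y),\qquad f(a,y)\;\eqdef\;\frac{1}{a}\ln\!\left(\frac{2e^{y}}{e^{y}+1}\right)+\frac{a\,e^{y}}{2(e^{y}+1)^{2}}.
\]

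Next I would compute $\min_{a}\max_{y}f(a,y)$ via saddle-point conditions. Solving $\partial_{y}f=0$ yields $a^{2} = \frac{2(e^{y}+1)^{2}}{e^{y}(e^{y}-1)}$, and solving $\partial_{a}f=0$ yields $a^{2} = \frac{2(e^{y}+1)^{2}}{e^{y}}\ln\!\bigl(\frac{2e^{y}}{e^{y}+1}\bigr)$. Equating these gives the transcendental equation
\[
(e^{y}-1)\ln\!\left(\frac{2e^{y}}{e^{y}+1}\right) \;=\; 1,
\]
whose unique positive root is $y^{*}\approx 1.201$, producing $a^{*}\approx 2.200$. At the saddle, the two summands of $f$ coincide (both equal $\frac{1}{a(e^{y}-1)}$), so the saddle value admits the closed form $f(a^{*},y^{*}) = \frac{2}{a^{*}(e^{y^{*}}-1)}\approx 0.391$. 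This establishes $R_{2}(\algsingle,\mathcal{D}_{sl}) \ge \frac{0.391}{\sqrt{\delta}}$ (adversary side) with the matching upper bound attained by the MWA with $\eta = a^{*}\sqrt{\delta}$. Equality with $R_{2}(\algsingle,\advuniv)$ is immediate from the structural argument preceding the theorem, which already showed that against any MWA the worst-case adversary must lie in $\mathcal{D}_{sl}$, so $\max_{\advuniv}=\max_{\mathcal{D}_{sl}}$ for every algorithm in $\algsingle$.

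The main obstacle is the careful asymptotic bookkeeping in the looping term: because that term is multiplied by $\frac{1}{2\delta}$, the inner expansion $\frac{1}{e^{\eta\ell}+1}-\frac{1-\delta}{e^{\eta(\ell+1)}+1}$ must be tracked to order $\delta$ to guarantee that only the $\Theta(\sqrt{\delta})$ contribution survives and that no hidden $\Theta(1/\sqrt{\delta})$ term is lost in the approximations $(1-\delta)^{\ell+1}\to 1$ and $e^{\eta}\to 1+\eta$. Controlling the Riemann-sum error for the straight-line term at the $1/\sqrt{\delta}$ scale requires analogous care. Once these estimates are justified the remainder is a clean single-variable root-finding problem, and the numerical value $0.391$ drops out directly from $y^{*}$ via the identity $f^{*} = \frac{2}{a^{*}(e^{y^{*}}-1)}$.
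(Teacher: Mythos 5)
Your proposal is correct and follows essentially the same route as the paper: the same asymptotic estimates of the two terms of \eqref{eq:mwa_regret_2} under $\eta=a\sqrt{\delta}$, $\ell=L/\sqrt{\delta}$, reducing to the identical two-variable minimax (your $f(a,y)$ is exactly the paper's $h(\alpha,\gamma)$ under the substitution $\gamma=e^{y}/(e^{y}+1)$), with the same saddle $a^{*}=2.200$, $\gamma^{*}=0.769$ and value $0.391$. The one substantive step the paper includes that you elide is the verification that the inner maximum over $\gamma\in[\tfrac{1}{2},1]$ is attained at the interior stationary point rather than at an endpoint (checking $h_{\gamma}>0$ at $\gamma=\tfrac{1}{2}$, that $\gamma(\alpha)=1$ forces $h=\ln(2)/\alpha\ge\ln(2)/\sqrt{2}>0.391$ for $\alpha\le\sqrt{2}$), which you should add; on the other hand, your single transcendental equation $(e^{y}-1)\ln\bigl(\tfrac{2e^{y}}{e^{y}+1}\bigr)=1$ and the closed-form saddle value $f^{*}=\tfrac{2}{a^{*}(e^{y^{*}}-1)}$ are a cleaner finish than the paper's numerical solution of a $2\times 2$ system.
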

\begin{proof}
We denote $e^{\eta}=\tau=1+\alpha\sqrt{\delta}$. We also can estimate $\eta\sim 
e^\eta - 1 = \alpha\sqrt{\delta}$, where $\alpha=\Omega(1)$ (if $\alpha\neq
\Omega(1)$, then, similar to the finite horizon model, straight or looping 
adversary alone already achieves regret of $\omega(\frac{1}{\sqrt\delta})$).
Without loss of generality we assume that 
$\ell=\beta\cdot\frac{1}{\sqrt{\delta}}=o(\frac{1}{\delta}).$ Then 
$(1-\delta)^\ell\sim 1$.
We further analyze separately each part of \eqref{eq:mwa_regret_2}. First, 
we have

\begin{align}
\sum_{d=0}^{\ell-1}\frac{(1-\delta)^{d+1}}{\tau^d+1}  & \sim  
\sum_{d=0}^{\ell-1}\frac{1}{\tau^d+1}\sim 
\int_{0}^{\ell}\frac{\mathrm{d}x}{e^{\eta x}+1} \nonumber \\
& =\frac{1}{\eta}\ln\left(2\frac{e^{\ell\eta}}{e^{\ell\eta}+1}\right)\sim 
\frac{\ln(2)}{\alpha\sqrt{\delta}}-\frac{\ln(1+e^{-\alpha\beta})}{\alpha\sqrt{\delta}}.
\label{eq:simple_path_regret}
\end{align}

The second part we can estimate as follows
\begin{align*}
(1-\delta)^{\ell+1}\sum_{k=0}^{\infty}\left[\frac{(1-\delta)^{2k}}{\tau^\ell+1}
-\frac{(1-\delta)^{2k+1}}{\tau^{\ell+1}+1}\right]\sim
\left[\frac{1}{\tau^\ell+1}-\frac{1-\delta}{\tau^{\ell+1}+1}\right]\cdot\sum_{k=0}^{\infty}(1-\delta)^{2k}\\
=\left[\frac{1}{\tau^\ell+1}-\frac{1}{\tau^{\ell+1}+1}+\frac{\delta}{\tau^{\ell+1}+1}\right]\frac{1}{2\delta-\delta^2}\sim
O(1)+\left[\frac{\tau^\ell(\tau-1)}{(\tau^\ell+1)(\tau^{\ell+1}+1)}\right]\frac{1}{2\delta}\\
\sim 
\frac{1}{2\delta}\cdot\frac{e^{\alpha\beta}\cdot\alpha\sqrt{\delta}}{(1+e^{\alpha\beta})^2}.
\end{align*}

Combining these two estimates we get that \eqref{eq:mwa_regret_2} is 
asymptotically equal to

\begin{align}
\frac{\ln(2)}{\alpha\sqrt{\delta}}-\frac{\ln(1+e^{-\alpha\beta})}{\alpha\sqrt{\delta}}+
\frac{1}{2\delta}\cdot\frac{e^{\alpha\beta}\cdot\alpha\sqrt{\delta}}{(1+e^{\alpha\beta})^2}=
\frac{\ln(2)}{\sqrt{\delta}\alpha}-\frac{\ln(1+e^{-\alpha\beta})}{\sqrt{\delta}\alpha}+\frac{\alpha}{2\sqrt{\delta}}\frac{e^{-\alpha\beta}}{(1+e^{-\alpha\beta})^2}.
\label{eq:optimization_alpha_beta}
\end{align}

With a new notation $\gamma=\frac{1}{1+e^{-\alpha\beta}}$ the above expression 
becomes
\begin{align*}
\frac{\ln(2)}{\alpha\sqrt{\delta}}+\frac{\ln(\gamma)}{\alpha\sqrt{\delta}}+\frac{\alpha}{2\sqrt{\delta}}\gamma(1-\gamma)=
\frac{\ln(2\gamma)}{\alpha\sqrt{\delta}}+\frac{\alpha}{2\sqrt{\delta}}\gamma(1-\gamma)=
\frac{1}{\sqrt{\delta}}\cdot h(\alpha,\gamma).
\end{align*}
We recall that the player first picks parameter $\alpha\in(0,\infty)$ for the 
algorithm and then the adversary decides on the optimal
$\gamma(\alpha)=\frac{1}{1+e^{-\alpha\beta}}$ which can be anything in 
$[\frac{1}{2},1]$. In other words we
are looking for
\begin{equation}
\label{eq:min_max_alpha_gamma}
\inf_{\alpha\in(0,\infty)}\max_{\gamma(\alpha)\in[\frac{1}{2},1]} 
h(\alpha,\gamma).
\end{equation}
For each fixed $\alpha$ we conclude that $\gamma(\alpha)$ is either 
$\frac{1}{2}$, or $1$, or such that 
$\frac{\mathrm{d}}{\mathrm{d}\gamma}h(\gamma)=\frac{1}{\alpha\gamma}+\frac{\alpha}{2}-\alpha\gamma=0$.
 The latter expression gives us a quadratic equation on $\gamma$, which has 
only one positive root:
\[
\gamma(\alpha)=\frac{\frac{1}{2}+\sqrt{\frac{1}{4}+\frac{4}{\alpha^2}}}{2}.
\]
We note that $h_\gamma(\alpha,\gamma)>0$, when $\gamma=\frac{1}{2}$. Therefore, 
$h(\alpha,\gamma)$ does not attain its maximum at $\gamma=\frac{1}{2}$.
For $\alpha<\sqrt{2}$ the above expression is greater than $1$ and, hence, 
$\gamma(\alpha)=1$. We further note that
$h_\gamma(\alpha,\gamma)=\frac{1}{\alpha}-\frac{\alpha}{2} < 0$ at $\gamma=1$. 
Thus 
$\gamma(\alpha)=\frac{\frac{1}{2}+\sqrt{\frac{1}{4}+\frac{4}{\alpha^2}}}{2}$
is a unique maximum of $h(\alpha,\gamma)$ for any $\alpha>\sqrt{2}$.

Now we need to find optimal $\alpha$ in
\[
\inf_{\alpha\in(0,\infty)}h(\alpha,\gamma(\alpha)).
\]
When $\alpha\in[0,\frac{1}{2}]$ we have $\gamma(\alpha)=1$ and 
$h(\alpha,\gamma)=\frac{\ln(2)}{\alpha}$, which attains its minimum of 
$\frac{\ln(2)}{\sqrt{2}}$ at $\alpha=\sqrt{2}$.

We know that the optimal $\alpha=\Theta(1)$.
Thus $h(\alpha,\gamma(\alpha))$ attains its minimum for some finite $\alpha$. 
For $\alpha\in[\sqrt{2},\infty)$ this minimum could be attained either at 
$\alpha=\sqrt{2}$, or at such $\alpha$ that 
$\frac{\mathrm{d}}{\mathrm{d}\alpha}h(\alpha,\gamma(\alpha))=0$. Note that
$\frac{\mathrm{d}}{\mathrm{d}\alpha}h(\alpha,\gamma(\alpha))=h_\alpha(\alpha,\gamma(\alpha))+\gamma'(\alpha)\cdot
 h_\gamma(\alpha,\gamma(\alpha))$, whereas $h_\gamma(\alpha,\gamma(\alpha))=0$. 
Thus we get $h_\alpha(\alpha,\gamma(\alpha))=0.$ Now we can write the following 
system of equations:

\begin{align*}
\begin{cases}
h_\gamma(\alpha,\gamma(\alpha)) &=0 \\
h_\alpha(\alpha,\gamma(\alpha)) &=0
\end{cases}
\Leftrightarrow
\begin{cases}
-\frac{\ln(2\gamma)}{\alpha^2}+\frac{1}{2}\gamma(1-\gamma) &=0 \\
\frac{1}{\alpha\gamma}-\alpha\gamma+\frac{\alpha}{2}  &=0
\end{cases}
\Leftrightarrow
\begin{cases}
\alpha^2 & = \frac{2\ln(2\gamma)}{\gamma(1-\gamma)} \\
\alpha^2 & = \frac{1}{\gamma(\gamma-0.5)}
\end{cases}
\end{align*}

Solving numerically this system of equations we find a unique solution:  
$\alpha=2.200$ and $\gamma(\alpha)=0.769$ with
$h(\alpha,\gamma)=0.391$. This number is smaller than $h(\sqrt{2},1) = 
\frac{\ln(2)}{\sqrt{2}}$. Therefore, $\alpha=2.2$, $\gamma=0.769$  is the 
optimal solution to \eqref{eq:min_max_alpha_gamma}. The resulting optimal 
regret of MWA is $\frac{0.391}{\sqrt{\delta}}$
with the optimal parameter $\eta\sim 2.2\times\sqrt{\delta}.$
\end{proof}

\begin{remark}
The regret $\frac{0.391}{\sqrt{\delta}}$ of $\algsingle$ in the geometric 
horizon model is by $10.6\%$ larger than the regret of 
$\frac{1}{2\sqrt{2\delta}}$ of the optimal algorithm (\cite{GPS16}) for $k=2$.
\end{remark}


\subsection{Improved regret lower bound for k experts.}
\label{app:mwa}
In this section, we derive new regret lower bound for $\algsingle$ in the 
geometric horizon model.
We show that for any number of experts $k$, as $\delta\to 0$, $\algsingle$  
cannot obtain a regret smaller than $\frac{1}{2}\sqrt{\frac{\ln(k)}{2\delta}}$. 
This is an improvement over the previously best known lower bound of of 
$\sqrt{\frac{T\log_2(k)}{16}}$ in the finite horizon model and its 
corresponding implication (it's not even clear if there is any) in the 
geometric horizon model. 

One obstacle in directly generalizing the adversary for $k=2$ experts is that 
there is no immediate generalization of the looping phase after $\ell$ steps 
of straight-line phase have passed, when $k \ge 3$. On the other hand, we still 
manage to get a lower bound which is only factor $2$ away from the upper bound 
by employing the looping and straight strategies separately. 

\begin{theorem}
\label{thm:mwa_lb}
In geometric horizon, $R_k(\algsingle, \{D_\ell,D_s\}) \geq 
\frac{1}{2}\sqrt{\frac{\ln k}{2\delta}} \geq 
\frac{1}{2}R_k(\algsingle,\advuniv)$.
\end{theorem}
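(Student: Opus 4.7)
The plan is to fix an arbitrary MWA with parameter $\eta \ge 0$ and show that \emph{at least one} of the two adversaries $D_\ell$ (looping) and $D_s$ (straight line) inflicts regret $\ge \tfrac{1}{2}\sqrt{\ln k/(2\delta)}(1-o(1))$; then $\max(R_{\delta,k}(A,D_\ell),R_{\delta,k}(A,D_s))$ lower bounds $R_k(\algsingle,\{D_\ell,D_s\})$, and the factor-$2$ comparison to $R_k(\algsingle,\advuniv)$ follows from the standard upper bound $R_k(\algsingle,\advuniv)\le\sqrt{\ln k/(2\delta)}$. The key idea is that one of the two primitives is good whenever $\eta$ is either ``too small'' or ``too large'' relative to $\sqrt{\delta}$, and an AM--GM step turns ``at least one is good'' into a uniform lower bound of the right order.

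First I would compute the asymptotic discounted regret inflicted by $D_\ell$ against MWA with parameter $\eta$. For even $k$, in the stationary regime the adversary advances a team of size $k/2$ each step; at step $2t+1$ the accumulated gains of the two teams are equal, the algorithm is correct with probability $\tfrac12$, and at step $2t$ the leading team has gap $1$, so the algorithm is correct with probability $e^\eta/(e^\eta+1)$. Summing with geometric weights $(1-\delta)^t$ and comparing to the discounted best-expert gain $(1-\delta)/(2\delta-\delta^2)$ gives, as $\delta\to 0$,
\[
R_{\delta,k}(A,D_\ell) \;\sim\; \frac{1}{2\delta}\cdot\frac{e^\eta-1}{2(e^\eta+1)} \;\gtrsim\; \frac{\eta}{8\delta}\bigl(1-o(1)\bigr),
\]
valid in the relevant regime $\eta=O(\sqrt\delta)$; for odd $k$ the same calculation with unequal teams $\lfloor k/2\rfloor,\lceil k/2\rceil$ loses a factor $(1-1/k^2)$.

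Next I would compute the regret against $D_s$. Here the single advanced expert has discounted gain $(1-\delta)/\delta$, while the algorithm's expected discounted gain is $(1-\delta)\sum_{d\ge 0}(1-\delta)^d \tfrac{e^{\eta d}}{e^{\eta d}+k-1}$, so
\[
R_{\delta,k}(A,D_s) \;=\; (1-\delta)\sum_{d\ge 0}(1-\delta)^d\frac{k-1}{e^{\eta d}+k-1}.
\]
Replacing the sum by $\tfrac{1}{\eta}\int_0^\infty e^{-\delta u/\eta}\tfrac{k-1}{e^u+k-1}\,du$ and noting that for $\eta=\Theta(\sqrt\delta)$ the factor $e^{-\delta u/\eta}$ is $1-o(1)$ on the range where the integrand is nonnegligible, I get $R_{\delta,k}(A,D_s)\sim \tfrac{1}{\eta}\int_0^\infty \tfrac{k-1}{e^u+k-1}\,du=\tfrac{\ln k}{\eta}$.

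Finally, since $\max(a,b)\ge\sqrt{ab}$ for $a,b\ge 0$, combining the two estimates yields
\[
\max\bigl(R_{\delta,k}(A,D_\ell),R_{\delta,k}(A,D_s)\bigr)\;\ge\;\sqrt{\frac{\eta}{8\delta}\cdot\frac{\ln k}{\eta}}\bigl(1-o(1)\bigr)\;=\;\frac{1}{2}\sqrt{\frac{\ln k}{2\delta}}\bigl(1-o(1)\bigr),
\]
independent of $\eta$, exactly matching the claim. The main obstacle is the boundary regime where $\eta$ is \emph{not} $\Theta(\sqrt\delta)$: if $\eta=o(\sqrt\delta)$ I must argue that $R_{\delta,k}(A,D_s)$ alone already exceeds $\tfrac{1}{2}\sqrt{\ln k/(2\delta)}$ (the integral representation still gives $\Omega(\ln k/\eta)$, which blows up), and if $\eta=\omega(\sqrt\delta)$ I must show $R_{\delta,k}(A,D_\ell)$ alone suffices (the ratio $(e^\eta-1)/(e^\eta+1)$ stays $\Omega(1)$, giving $\Omega(1/\delta)\gg 1/\sqrt\delta$); together with the AM--GM argument in the intermediate regime this yields a uniform lower bound. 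A brief case analysis of these three regimes closes the proof.
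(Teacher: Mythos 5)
Your proposal follows essentially the same route as the paper: both compute the asymptotic discounted regrets $\frac{\ln k}{\eta}$ and $\frac{\eta}{8\delta}$ inflicted by the straight-line and looping primitives respectively, both dispose of the degenerate regimes where $\eta$ is not $\Theta(\sqrt{\delta})$ by showing one primitive alone already forces regret $\omega(1/\sqrt{\delta})$, and your AM--GM step $\max(a,b)\ge\sqrt{ab}$ is the same computation as the paper's explicit minimization of the max over $\alpha=\eta/\sqrt{\delta}$ at $\alpha=\sqrt{8\ln k}$. The one small discrepancy is the odd-$k$ looping split: your unequal teams $\lfloor k/2\rfloor,\lceil k/2\rceil$ would yield only $\frac{1}{2}\sqrt{\frac{\ln k}{2\delta}\left(1-\frac{1}{k^2}\right)}$ rather than the stated bound, whereas the paper sidesteps this by using two equal teams of size $(k-1)/2$ (leaving one expert idle during the looping phase), which recovers the two-expert calculation exactly; this is cosmetic and easily repaired.
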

\begin{proof}
We begin with the case of $2$ experts and then generalize the proof. We show 
that the looping and staright-line primitives in combination yield the desired 
lower bound.  In the description below, $\eta$ refers to the parameter in the 
exponent, as described in the paragraph close to the beginning of this section. 
The main idea is to show that different regimes of $\eta$ are rendered 
ineffective by different primitives.

\paragraph{Straight-line and looping primitives.} A quick reminder of the 
straight-line and looping primitives: 

\begin{enumerate}[leftmargin =*]
\item The {\em straight-line primitive} picks an arbitrary expert and always 
keeps advancing that expert by $1$ in each step. Thus the lag between the 
leading and all lagging 
experts keeps 
monotonically increasing by one in each step. 
\item The {\em looping primitive} gives labels $A$ and $B$ to the two experts, 
and 
advances expert number $1$ in odd-numbered steps and the other expert in 
even-numbered steps. In effect, the difference between these two experts loops 
between $0$ 
and $1$. 
\end{enumerate}

Remarkably, these two simple primitives generate regrets of 
$\frac{\ln(k)}{\eta}$ and $\frac{\eta}{8\delta}$ in geometric model and that 
match exactly with the standard upper bound analysis for MWA ($\algsingle$). 
The only 
difference between the upper and lower bounds is that in the former, one gets 
the {\em sum} of these two regret terms, but in our lower bound, we get only 
the {\em max} of these two regret terms. Thus, our lower bound is exactly a 
factor $2$ away from the known upper bound of $\sqrt{\frac{\ln(k)}{2\delta}}$.

For convenience, we write $e^{\eta} = \tau = 1 + \alpha\sqrt{\delta}$. The two 
primitives/adversaries together place strong bounds on what $\alpha$ should be: 
they imply that $\alpha = \Theta(1)$.  We show this in $2$ steps: first we show 
that $\alpha = O(1)$, and then show that $\alpha = \Theta(1)$.
\begin{enumerate}[leftmargin=*]
\item The looping adversary forces $\alpha$ to be $O(1)$. The regret of MWA on 
a looping adversary is given by
$$\sum_{d=0}^{\infty}(1-\delta)^{2d+1}\left[\frac{1}{2}-\frac{1-\delta}{1+\tau}\right] = \frac{1-\delta}{1-(1-\delta)^2}\left[\frac{1}{2}-\frac{1-\delta}{1+\tau}\right]
\sim \frac{1}{2\delta}\left[\frac{1}{2}-\frac{1-\delta}{1+\tau}\right].$$ Since MWA's regret upper bound in the geometric horizon model is
$\Theta\left(\sqrt{\frac{1}{\delta}}\right)$, our lower bound on the regret $\frac{1}{2\delta}\left[\frac{1}{2}-\frac{1-\delta}{1+\tau}\right]=
\frac{1}{2\delta}\left[\frac{\alpha\sqrt{\delta}+2\delta}{2(2+\alpha\sqrt{\delta})}\right]$ is at most $\Theta\left(\sqrt{\frac{1}{\delta}}\right)$. This implies that $\alpha = O(1)$.

\item We use straight-line adversary to show that $\alpha = \Theta(1)$. We argue that when $\alpha = o(1)$, the straight-line adversary will result in the regret of $\omega(\sqrt{\frac{1}{\delta}})$. In fact, we don't even need to consider $\alpha = o(1)$. Just assume that $\alpha$ is a very small constant $c$ independent of $\delta$. Then $\tau = 1+\alpha\sqrt{\delta} = 1+c\sqrt{\delta}$. Consider running the straight-line adversary for just $\frac{1}{c\sqrt{\delta}}$ steps. For any $d  = o(\frac{1}{\delta})$, we have $(1-\delta)^{d+1} \sim 1$. Also, for all $d \leq \frac{1}{c\sqrt{\delta}}$, we have $\tau^d = (1+c\sqrt{\delta})^{d} \leq e$. Thus the straight-line adversary running till $\frac{1}{c\sqrt{\delta}}$ has a regret of

$$\sum_{d = 0}^{\frac{1}{c\sqrt{\delta}}} (1-\delta)^{d+1} \frac{1}{\tau^d + 1}\sim \sum_{d = 0}^{\frac{1}{c\sqrt{\delta}}} \frac{1}{\tau^d + 1} \geq \sum_{d = 0}^{\frac{1}{c\sqrt{\delta}}} \frac{1}{e+1} = \frac{1}{e+1}\left(1+\frac{1}{c\sqrt{\delta}}\right).$$
As $c$ gets smaller, the regret becomes very large, and hence there is a lower bound on $c$. This shows that $\alpha = \Theta(1)$.
\end{enumerate}

\paragraph{Regret calculation for $k$ experts.} We are now ready to show the 
lower bound on the regret for arbitrary number $k$ of experts. While one could 
do something more sophisticated for $k$ experts, we consider just the looping 
and straight line adversaries similar to the case of $k=2$ experts. The 
straight-line adversary for arbitrary $k$ fixes on an arbitrary expert and 
advances just that expert by $1$ in each step till the game dies. The looping 
adversary splits the set of $k$ experts into two teams $A$ and $B$ of equal 
size (if $k$ is odd, the sets are of size $(k-1)/2$ each and it won't matter 
for asymptotics), and all experts in team $A$ are advanced in odd numbered 
rounds, and those in $B$ are advanced in even numbered rounds. It is immediate 
to see that the looping adversary gets a 
regret of exactly what it obtained for the case of $2$ experts, namely, 
$\frac{1}{2\delta}\left[\frac{1}{2}-\frac{1-\delta}{1+\tau}\right]\sim\frac{\alpha}{8\sqrt{\delta}}$
(the regret is only higher when $k$ is odd). 
The straight line adversary generates a higher regret, and it amounts to 
$\frac{\ln(k)}{\alpha \sqrt{\delta}}$. To see this, note that with a lead of 
$d$, MWA follows the leading expert with probability 
$\frac{\tau^d}{\tau^d+k-1}$, and each of the $k-1$ other experts with 
probability $\frac{1}{\tau^d+k-1}$. This generates a regret of 
$\sum_{d=0}^{\infty}(1-\delta)^{d+1}\frac{k-1}{\tau^d+k-1}$. As in the case of 
$2$ experts, we lower bound this regret by stopping the straight line adversary 
after $\ell$ steps where $\ell = \omega\left(\sqrt{\frac{1}{\delta}}\right)$, 
but $\ell = o(\frac{1}{\delta})$. Thus, the straight-line adversary generates a 
regret of at least
\begin{align}
\label{eq:mwa_k_experts_straight}
\sum_{d = 0}^{\ell-1} (1-\delta)^{d+1} \frac{k-1}{\tau^d + k-1} &\sim \sum_{d = 0}^{\ell-1} \frac{k-1}{e^{\eta d} + k-1}\sim \int_0^{\ell}\frac{k-1}{e^{\eta x}+k-1}\,\mathrm{d}x \nonumber\\
& =\frac{1}{\eta}\int_{0}^{\ell\eta}\left[1-\frac{e^{y}}{e^{y}+k-1}\right]\,\mathrm{d}y
=\frac{1}{\eta}\left[y-\ln(e^{y}+k-1)\right]_{y=0}^{y=\ell\eta}\nonumber\\
&  = \frac{1}{\eta}\ln\left(k\frac{e^{\ell\eta}}{e^{\ell\eta}+k-1}\right)
\sim \frac{\ln(k)}{\eta}. 
\end{align}
The looping adversary, as already mentioned, obtains a regret of
\begin{align*}
\frac{1}{2\delta}\left[\frac{1}{2}-\frac{1-\delta}{1+\tau}\right]=
\frac{1}{2\delta}\left[\frac{\alpha\sqrt{\delta}+2\delta}{2(2+\alpha\sqrt{\delta})}\right]
\sim \frac{1}{2\delta}\frac{\alpha\sqrt{\delta}}{4}
= \frac{\alpha}{8\sqrt{\delta}}.
\end{align*}

Thus we get a regret lower bound of $\max\left(\frac{\ln(k)}{\alpha\sqrt{\delta}}, \frac{\alpha}{8\sqrt{\delta}}\right)$, which is minimized when $\alpha = \sqrt{8 \ln(k)}$ giving a lower bound of $\frac{1}{2}\sqrt{\frac{\ln(k)}{2\delta}}$.
\end{proof}

\end{document}